\def\eqref#1{equation~\ref{#1}}
\def\1{\bm{1}}
\def\clss{{\mathcal{C}}}
\def\ifs{{\mathcal{X}}}
\def\clf{{f_{\theta}}}
\def\pred{{\tilde{\rho}}}
\def\rvx{{\mathbf{x}}}
\def\rvy{{\mathbf{y}}}
\DeclareMathAlphabet{\mathsfit}{\encodingdefault}{\sfdefault}{m}{sl}
\SetMathAlphabet{\mathsfit}{bold}{\encodingdefault}{\sfdefault}{bx}{n}
\DeclareMathOperator*{\argmax}{arg\,max}
\useunder{\uline}{\ul}{}
\title{Learning under Label Proportions for Text Classification}
\author{Jatin Chauhan \\
  UCLA \\
  \texttt{chauhanjatin100@cs.ucla.edu} \\\And
  Xiaoxuan Wang \\
  UCLA \\
  \texttt{xw27@cs.ucla.edu} \\\And
  Wei Wang \\
  UCLA \\
  \texttt{weiwang@cs.ucla.edu}}
\theoremstyle{definition}
\newtheorem{definition}{Definition}
\theoremstyle{remark}
\theoremstyle{theorem}
\newtheorem{theorem}{Theorem}
\theoremstyle{assumption}
\theoremstyle{proposition}
\newtheorem{lemma}[theorem]{Lemma}
\begin{document}
\maketitle

\begin{abstract}
We present one of the preliminary NLP works under the challenging setup of Learning from Label Proportions (LLP), where the data is provided in an aggregate form called bags and only the proportion of samples in each class as the ground truth. This setup is inline with the desired characteristics of training models under Privacy settings and Weakly supervision. By characterizing some irregularities of the most widely used baseline technique DLLP, we propose a novel formulation that is also robust. This is accompanied with a learnability result that provides a generalization bound under LLP. Combining this formulation with a self-supervised objective, our method achieves better results as compared to the baselines in almost $87\%$ of the experimental configurations which include large scale models for both long and short range texts across multiple metrics (\href{https://drive.google.com/file/d/1c51MewWoEV0nuotQlPgh6uLt-iJ9kA16/view?usp=drive_link}{Code Link}). 

\end{abstract}

\section{Introduction}
The supervised classification setting in machine learning usually requires access to data samples with ground truth labels and our goal is to then learn a classifier that infers the label for new data points. However, in many scenarios obtaining such a dataset with labels for each individual sample can be infeasible and thus calls attention to alternative training paradigms. In this work, we study the setup where we are provided with \textit{bags} of data points and the \textit{proportion} of samples belonging to each class (instead of the one-one correspondence between a sample and its label). The inference however, still needs to be performed at the instance level in order to extract meaningful predictions from the model. This setup, which following \cite{pmlr-v28-yu13a} we refer to as \textit{Learning from Label Proportions} (LLP), is attractive especially for at least two broad reasons \cite{10.1145/3511808.3557293}. 

The first of these is \textbf{Privacy} centric learning \cite{https://doi.org/10.48550/arxiv.2201.12666}. With ever increasing demand of user privacy in almost all applications of digital mediums, the individual record (here the label) for each sample (say a user's document) can't be exposed and thus learning in a fully supervised manner deems infeasible. The most notable applications include medical data \cite{yadav-etal-2016-deep} and e-commerce data \cite{https://doi.org/10.48550/arxiv.2201.12666}, amongst various others. Both of these contain abundant language data with multiple use cases of learning a classification model to perform disease prediction from patient's medical data and analyzing user's behavior for example, respectively. This makes LLP a highly relevant learning paradigm to train models, both large and small, for data that needs \textit{k-anonymity}. The second relevant property is \textbf{Weak Supervision}. Since it is not always feasible to obtain clean data at a large scale, the training of the NLP models, both large and small, relies heavily on weakly supervised or self-supervised learning scrapped from the open web. LLP can play a key role as obtaining data at an \textit{aggregated} level (formal definition in Section \ref{sec:prelims}) can be a relatively easier and more feasible process.  

While there exist various prior works that have proposed new formulations to learn under this setting \cite{10.1109/ICDM.2007.50, https://doi.org/10.48550/arxiv.1207.1393, https://doi.org/10.48550/arxiv.1910.13188}, many of them are either not readily applicable to learning deep models or very difficult to align with language data. Furthermore, to our knowledge, there exists only one prior work of \cite{10.5555/3061053.3061132} that discusses LLP for language data but is out of scope of this work as they focus on domain adaptation. We provide an elaborate discussion of the LLP literature in Section \ref{sec:related_work}.

In this work, we address some shortcomings of one of the first loss formulations proposed for learning deep neural networks under LLP setup by \cite{Ardehaly_2017}, termed as DLLP method. For a given bag, DLLP optimizes the KL divergence between the ground truth bag level proportion and the aggregate of the predictions for the instances in the entire bag. In Section \ref{sec:method}, we highlight certain properties of DLLP objective that can be highly undesirable for training deep networks. Motivated by this, we propose a novel objective function that is a parametrization of the \textit{Total Variation Distance} (TVD), which itself is a lower bound to the KL via the \textit{Pinsker's inequality}. Our formulation enjoys more functional flexibility because of the introduced parameter 
while retaining the \textit{outlier robustness} property of the TVD.
We also discuss some theoretical results for the proposed novel formulation. Lastly, we combine our formulation with an auxiliary self-supervised objective that greatly aids in representation learning during the fine-tuning stage of the large scale NLP models experimented with. 

Experimentally, we first demonstrate that the proposed formulation is indeed better and align with the theoretical motivation provided. In the main results, we demonstrate that our formulation achieves better results compared to the baselines in almost $87\%$ of the 20 extensive configurations across 4 widely used models and 5 datasets. 
We observe up to $40\%$ improvement in weighted precision metric on the BERT model. In many cases, the improvements range between $2\%$ to $9\%$ which are highly substantial. Further analysis including the bag sizes and hyperparameters also provide interesting insights into our formulation. 

To summarize, we have the following contributions: (i) A novel loss formulation that addresses the shortcomings of the previous work with supporting theoretical and empirical results. (ii) One of the preliminary works discussing the application of LLP to natural language tasks. (iii) Strong empirical results demonstrating that our method outperforms the baselines in most of the configurations.

\section{Preliminaries}
\label{sec:prelims}
We consider the standard multi-class classification setup with $C$ classes ($\clss = [C] = \{1,...,C\}$). The input instances $\rvx$ are sampled from an unknown distribution over the feature space $\ifs$. Contrary to the availability of the data samples of the form $(\rvx, \rvy)$, where $\rvy \in \clss$, to train the model in full supervision, we are provided a \textbf{bag} of input instances, $B=\{\rvx_i | i \in [|B|]\}$ ($|B|$ is the cardinality), with associated \textit{proportions} of the labels $\rho= (\rho^1,\rho^2,...,\rho^C)$. The elements of $\rho$ are defined as:
\begin{align}
		\rho^j = \frac{|\{\rvx_i | \rvx_i \in B, \rvy_i = j \}|}{|B|}
\end{align}
It is important to note that the ground truth labels $\rvy_i$ are not accessible and the entire information is contained in $\rho$. This $\rho$ essentially provides the \textit{weak} supervision for training the model. \\
Given the training bags along with the label proportions as the tuple $(B_i,\rho_i)$ , the objective is still to learn an \textit{instance} level predictor that maps an instance $\rvx$ to the class distribution $\Delta = \{z \in \mathbb{R}_{+}^{C} | \sum_{l=1}^{C} z^l = 1 \} $. The predicted class is attained as: $\argmax_{c \in \mathcal{C}} \Delta$. We denote the classifier by $\clf : \ifs \rightarrow \Delta$ with learnable parameters $\theta \in \Theta$, which typically is a neural network such as BERT with appropriate classification head.

\section{Method}
\label{sec:method}
Since the supervision under the LLP setup is coarse grained, given by $\rho$, the works in the literature \citep{Ardehaly_2017}, \citep{https://doi.org/10.48550/arxiv.1905.12909} utilize training criteria based on the true proportions $\rho_i$ and predicted proportions $\pred_i$ for the input bag $B_i$. The predicted proportions $\pred_i$ are typically some function of predicted class distributions $\clf(\rvx_j)$ for $\rvx_j \in B_i$ as $\pred_i = g(\clf(\rvx_1),...,\clf(\rvx_{|B_i|}))$. The popular choice for $g$ is the \textit{mean} function and we retain the same in this work. A descriptive pipeline of the LLP setup is provided in Figure \ref{fig:model}.\\	

\begin{figure*}[h]
    \centering
    \includegraphics[width=160mm,height=50mm]{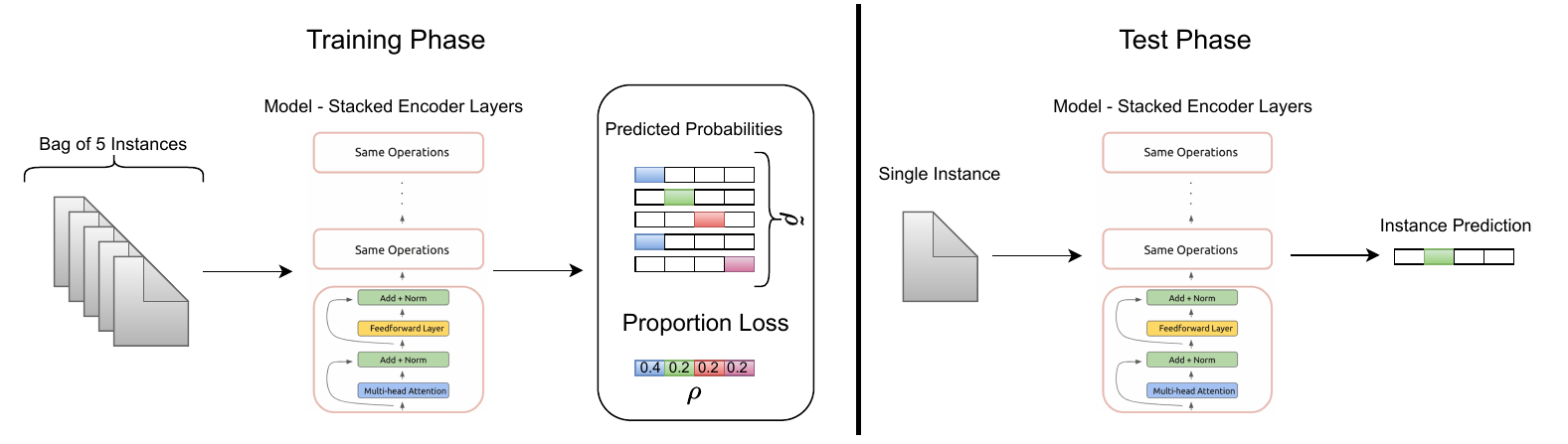}
    \caption{Demonstration of the Training (left) and Test (right) phase for LLP setup respectively. During training, a bag of instances is received and the output probabilities (a 4 class example setup, outputs colored based on the $\argmax$ values) generated by the model are aggregated (computation of $\pred$). The proportion loss against the true proportion $\rho$ is then computed. In the test phase, the predictions are required to be performed at the instance level.}
    \label{fig:model}
\end{figure*}

\subsection{Motivation}
\label{subsec:motivation}
The work of \citep{Ardehaly_2017} proposed one of the first loss formulations to learn deep neural networks under the LLP setup. Their loss objective over the proportions $\rho_i$ and $\pred_i$ is given by: $$\mathcal{L}_{DLLP} = \sum_{c=1}^{C} \rho_i^c log \frac{\rho_i^c}{\pred_i^c}$$. 
\begin{definition}[Lipschitz Continuity]
A function $h$ is called Lipschitz continuous if there exists a positive constant $K$ such that $||h(x) - h(y)|| \leq K ||x - y|| , \forall x, y \in dom(h)$ . 
\end{definition}
We note some irregularities of $\mathcal{L}_{DLLP}$ which can lead to sub-optimal parameter values post training:  
\begin{enumerate}
\item It is \textit{not upper bounded}: Since the denominator of the log is the quantity $\pred_i^c$, which can be arbitrarily close to 0, the loss can go unbounded on the positive real axis.
\item It is not \textit{robust}: follows from the previous point as small changes in $\rho^c$ and $\pred^c_i$ can lead to huge variations in the output. 
\item It is not \textit{symmetric}: due to the asymmetric nature of the definition of the objective.
\item It is not \textit{Lipschitz continuous} in either of the arguments: follows from the first point.
\item It is not \textit{Lipschitz smooth} (associating to gradients) in either of the arguments: stating that the gradients of the loss are not Lipschitz continuous. This can be associated with the largest eigenvalue of the Hessian, which exists for most losses in NN training and is critical to the training dynamics \cite{gilmer2022a}, thus dictating the optima, if achieved.
\end{enumerate}

The aforementioned irregularities are undesirable especially for deep neural network optimization. This motivates us to propose a new loss function that partly remedies the potential failure modes of $\mathcal{L}_{DLLP}$. To begin, we state the following result:
\begin{lemma}
\label{lemma:LBKL}[Pinsker's Inequality]
For the probability mass functions $\rho$ and $\pred$, 
\begin{align}
\label{eq:LBKL}
		\mathcal{L}_{DLLP} = D_{KL}(\rho||\pred) & \geq \frac{1}{2} \left ( \sum_{c \in \mathcal{C}} |\rho^c - \pred^c| \right )^2 \nonumber \\
        & = 2 \times D_{TV}(\rho||\pred) ^2 
\end{align}
where, $D_{KL}$ is the \textit{KL Divergence} and $D_{TV}$ is the \textit{Total Variation Distance}, $\rho^c$ as defined previously. $\rho^c$ is used as a shorthand notation over the domain here. 
\end{lemma}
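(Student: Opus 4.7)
The plan is to prove Pinsker's inequality by the classical two-step strategy: first reduce the multiclass claim to a Bernoulli claim, and then verify the Bernoulli claim by elementary one-variable calculus. The final equality in \eqref{eq:LBKL} is just book-keeping, since with the convention $D_{TV}(\rho\|\tilde{\rho}) = \tfrac{1}{2}\sum_{c}|\rho^c - \tilde{\rho}^c|$ one has $\tfrac{1}{2}\bigl(\sum_c |\rho^c-\tilde{\rho}^c|\bigr)^2 = 2\,D_{TV}(\rho\|\tilde{\rho})^2$; so the real content is the inequality $D_{KL}(\rho\|\tilde{\rho}) \geq 2\,D_{TV}(\rho\|\tilde{\rho})^2$.

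First I would perform the binary reduction. Let $A = \{c \in \mathcal{C} : \rho^c \geq \tilde{\rho}^c\}$ and set $p = \sum_{c \in A}\rho^c$, $q = \sum_{c \in A}\tilde{\rho}^c$. A short calculation shows
\begin{equation*}
D_{TV}(\rho\|\tilde{\rho}) \;=\; \tfrac{1}{2}\sum_{c}|\rho^c - \tilde{\rho}^c| \;=\; p - q,
\end{equation*}
since the positive and negative parts of $\rho^c - \tilde{\rho}^c$ have equal absolute mass. Next, by the log-sum inequality applied separately to the index set $A$ and its complement, grouping terms gives
\begin{equation*}
D_{KL}(\rho\|\tilde{\rho}) \;\geq\; p\log\frac{p}{q} + (1-p)\log\frac{1-p}{1-q} \;=:\; d(p\|q),
\end{equation*}
i.e.\ the data-processing inequality applied to the two-cell partition $\{A, A^c\}$. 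So it suffices to prove $d(p\|q) \geq 2(p-q)^2$ for all $p,q \in [0,1]$.

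For the Bernoulli case I would fix $q \in (0,1)$ and define $\varphi(p) = d(p\|q) - 2(p-q)^2$ on $p \in (0,1)$, extended by continuity at the endpoints. Direct differentiation yields $\varphi(q) = 0$, $\varphi'(q) = 0$, and
\begin{equation*}
\varphi''(p) \;=\; \frac{1}{p(1-p)} - 4 \;\geq\; 0,
\end{equation*}
because $p(1-p) \leq \tfrac{1}{4}$ for $p \in [0,1]$. Hence $\varphi$ is convex with a minimum of $0$ at $p=q$, giving $d(p\|q) \geq 2(p-q)^2$; combined with the reduction above this yields $D_{KL}(\rho\|\tilde{\rho}) \geq 2(p-q)^2 = 2\,D_{TV}(\rho\|\tilde{\rho})^2 = \tfrac{1}{2}\bigl(\sum_c|\rho^c - \tilde{\rho}^c|\bigr)^2$, as claimed.

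The only mildly delicate step is the reduction via the log-sum inequality / data-processing step, where one must handle the boundary cases $q \in \{0,1\}$ and the convention $0\log 0 = 0$: if some $\tilde{\rho}^c = 0$ while $\rho^c > 0$, then $D_{KL} = \infty$ and the bound is trivial, and if $p \in \{0,1\}$ or $q \in \{0,1\}$ the Bernoulli inequality holds by a direct continuity argument. Everything else is routine.
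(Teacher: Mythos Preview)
Your proof is correct and is the standard textbook argument for Pinsker's inequality: reduce to the two-point case via the data-processing (log-sum) inequality, then establish the Bernoulli bound by a convexity computation on $\varphi(p) = d(p\|q) - 2(p-q)^2$. The boundary cases are handled appropriately.

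There is nothing to compare against, however: the paper does not supply its own proof of Lemma~\ref{lemma:LBKL}. It is stated as the classical Pinsker inequality and used as a known result to motivate the passage from $\mathcal{L}_{DLLP}$ to the total-variation-based loss $\mathcal{L}_{TV\star}^{\alpha}$. So your write-up simply fills in a proof the authors left to the literature, and what you have is exactly the argument one finds in standard references.
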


\subsection{Proposed Formulation}
$D_{TV}$ has been utilized as a loss function in multiple works in the deep learning literature \cite{https://doi.org/10.48550/arxiv.2102.02414, https://doi.org/10.48550/arxiv.2010.13456}. While it admits a lower bound to the $\mathcal{L}_{DLLP}$ term, more importantly it is \textit{robust} \cite{https://doi.org/10.48550/arxiv.2010.13456}. 
To further obtain higher functional flexibility while retaining this robustness property, we propose the following loss formulation, termed as $\mathcal{L}_{TV\star}^{\alpha}$ with hyperparameter $\alpha$:
\begin{align}
\label{eq:loss1}
	\mathcal{L}_{TV\star}^{\alpha} = \frac{1}{2} \left ( \sum_{c \in \mathcal{C}} |\rho^c - \pred^c|^{\alpha} \right )^{2/\alpha}
\end{align}
We discuss and prove various properties of $\mathcal{L}_{TV\star}^{\alpha}$ in Appendix \ref{sec:appendix_proofs} 
and note that it does not admit any of the aforementioned irregularities stated in Section \ref{subsec:motivation} for a broad range of values for $\alpha$ thus providing a better alternative. Further, we also empirically demonstrate the superiority of optimizing the model with the $\mathcal{L}_{TV\star}^{\alpha}$ against $\mathcal{L}_{DLLP}$ in Section \ref{sec:exp}.\\

\subsection{Learnability under $\mathcal{L}_{TV\star}^{\alpha}$ }
We provide the following learnability result for a class of binary classifiers under the proposed formulation $\mathcal{L}_{TV\star}^{\alpha}$. The extension to multi-class setting is feasible via the use of Natarajan dimension and corresponding tools, however it is slightly more sophisticated and thus we leave it for future work. 

\begin{definition}[VC Dimension] The maximum cardinality of a set $S=(\rvx_1,...\rvx_m) \in \ifs^{m}$ of points that can be shattered by the hypothesis class $H$ under consideration. 
\end{definition}

\begin{theorem}
\label{theorem:main}
Consider a hypothesis class $H$ containing functions $f : \ifs \rightarrow \{0, 1\}$ and a target function $f_0 \in H$. Assume a sample $S$ with $m$ instances sampled i.i.d from $\ifs$. We represent the \textit{VC dimension} of the class $H$ by $\mathcal{V}$. Further, denoting $\rho_f = \mathbb{E}_{\rvx \sim \mu} 1_{[f(\rvx) = 1]}$ where $\mu$ represents the probability measure over $\ifs$ and also $\tilde{\rho}_f = \frac{1}{m}\sum_{\rvx \in S} 1_{[f(\rvx) = 1]}$. Then with probability at least $1 - \delta$, $\forall f \in H$ and $\forall \alpha > 0$, we have
\begin{align}
	& \mathcal{L}_{TV\star}^{\alpha}(\rho_f, \rho_{f_0}) \leq \mathcal{L}_{TV\star}^{\alpha}(\tilde{\rho}_f, \tilde{\rho}_{f_0}) + \nonumber \\ 
    & \mathbf{\kappa} \biggl( \sqrt{\frac{8\mathcal{V}log(em/\mathcal{V})}{m}} + \sqrt{\frac{2log(4/\delta)}{m}} \biggl ) \label{eq:main_paper_main_bound}
\end{align}
where $\kappa = 2^{2/\alpha - 1}$.
\end{theorem}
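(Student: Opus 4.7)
My plan is to prove the theorem in three movements: a structural simplification of $\mathcal{L}_{TV\star}^{\alpha}$ in the binary setting, a standard Vapnik--Chervonenkis uniform-convergence bound on the class-one probability estimates, and a combination step that transfers concentration of proportions into concentration of the squared-difference loss. I would first reduce the loss to a scalar quantity: in the binary case $\mathcal{C}=\{0,1\}$, the relation $\rho_f^0 = 1-\rho_f$ forces $|\rho_f^0-\rho_{f_0}^0| = |\rho_f-\rho_{f_0}|$, so the sum inside $\mathcal{L}_{TV\star}^{\alpha}$ collapses to two equal terms and a direct computation gives
\begin{align*}
\mathcal{L}_{TV\star}^{\alpha}(\rho_f,\rho_{f_0}) \;=\; \tfrac{1}{2}\bigl(2|\rho_f-\rho_{f_0}|^{\alpha}\bigr)^{2/\alpha} \;=\; \kappa\,(\rho_f-\rho_{f_0})^2 ,
\end{align*}
with $\kappa = 2^{2/\alpha-1}$, and the analogous identity holds for the empirical version. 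Proving the theorem therefore reduces to bounding $(\rho_f-\rho_{f_0})^2 - (\tilde\rho_f-\tilde\rho_{f_0})^2$ by the VC term inside the parentheses.

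Next I would invoke classical uniform convergence. Since $\{\mathbf{1}[f(\cdot)=1]:f\in H\}$ is a $\{0,1\}$-valued class of VC dimension $\mathcal{V}$, the two-sided Vapnik--Chervonenkis inequality combined with Sauer--Shelah ($\Pi_H(n) \le (en/\mathcal{V})^{\mathcal{V}}$) gives that with probability at least $1-\delta$,
\begin{align*}
\sup_{f\in H}|\rho_f-\tilde\rho_f| \;\le\; \varepsilon_m \;:=\; \sqrt{\tfrac{8\mathcal{V}\log(em/\mathcal{V})}{m}}+\sqrt{\tfrac{2\log(4/\delta)}{m}} .
\end{align*}
Because $f_0\in H$, the same event simultaneously controls $|\rho_{f_0}-\tilde\rho_{f_0}|$.

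Finally I would combine via the triangle inequality and a difference-of-squares factorisation. From the identity $\rho_f-\rho_{f_0} = (\tilde\rho_f-\tilde\rho_{f_0}) + (\rho_f-\tilde\rho_f) - (\rho_{f_0}-\tilde\rho_{f_0})$ I obtain $|\rho_f-\rho_{f_0}|\le|\tilde\rho_f-\tilde\rho_{f_0}|+2\varepsilon_m$. Writing $A=|\rho_f-\rho_{f_0}|$, $B=|\tilde\rho_f-\tilde\rho_{f_0}|$ and using $A^2-B^2 = (A-B)(A+B)$ with $A+B\le 2$ I conclude $A^2-B^2 \le 4\varepsilon_m$, so that multiplying by $\kappa$ delivers the required inequality up to numerical constants.

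The hard part will be the careful tracking of these constants: two triangle-inequality steps plus the $(A-B)(A+B)$ factorisation introduce a factor of four in front of $\kappa\varepsilon_m$, whereas the statement advertises just $\kappa\varepsilon_m$. To recover the tighter form one would either apply the VC bound directly to the signed-difference class $\{\mathbf{1}[f(\cdot)=1]-\mathbf{1}[f_0(\cdot)=1]:f\in H\}$ (whose VC dimension is also at most $\mathcal{V}$, saving one triangle step), or tune the symmetrisation constants inside the VC inequality so that the $\sqrt{8}$ and $\sqrt{2}$ coefficients already absorb the residual factor; either route preserves the three-step skeleton above, and neither relies on Pinsker's inequality (Lemma~\ref{lemma:LBKL}), which is needed only to relate $\mathcal{L}_{TV\star}^{\alpha}$ back to $\mathcal{L}_{DLLP}$.
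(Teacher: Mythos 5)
Your proposal is correct in its skeleton but reaches the stated bound only up to a constant factor, and it takes a genuinely different route from the paper. The paper works through the disagreement loss $l(f,x)=(f(x)-f_0(x))^2=\mathbf{1}[f(x)\neq f_0(x)]$: it bounds the empirical Rademacher complexity of the loss class $\{l(f,\cdot):f\in H\}$ via Sauer--Shelah and Massart's lemma, obtains a two-sided uniform bound on $\bigl|\mathbb{E}[l]-\tfrac{1}{m}\sum_S l\bigr|$, and then uses Jensen's inequality to relate $\mathbb{E}[l]$ to $(\rho_f-\rho_{f_0})^2=2^{1-2/\alpha}\mathcal{L}_{TV\star}^{\alpha}(\rho_f,\rho_{f_0})$ and likewise for the empirical quantities, before combining ``under a mild monotonicity condition.'' You instead apply the VC/Rademacher bound directly to the proportion functionals $\rho_f$ versus $\tilde\rho_f$ and finish with two triangle-inequality steps and a difference-of-squares factorisation. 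Your scalar identity $\mathcal{L}_{TV\star}^{\alpha}=\kappa(\rho_f-\rho_{f_0})^2$ with $\kappa=2^{2/\alpha-1}$ is exactly the paper's equation (24)/(26) read as an equality, so that part coincides. What your route buys is soundness of the final combination: in the paper, Jensen gives $\tfrac{1}{m}\sum_S l \geq (\tilde\rho_f-\tilde\rho_{f_0})^2$, which points the \emph{wrong} way for replacing the empirical loss by $\mathcal{L}_{TV\star}^{\alpha}(\tilde\rho_f,\tilde\rho_{f_0})$ on the right-hand side (the empirical disagreement rate can far exceed the squared difference of empirical proportions, e.g.\ when $f$ and $f_0$ disagree on many points but have equal proportions); this is precisely what the unproved ``mild monotonicity condition'' papers over. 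Your decomposition avoids that issue entirely because it never passes through $\mathbb{E}[l]$.

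The price you pay, which you correctly flag, is the factor of $4$: two applications of $\sup_f|\rho_f-\tilde\rho_f|\leq\varepsilon_m$ give $|A-B|\leq 2\varepsilon_m$, and $A^2-B^2=(A-B)(A+B)\leq 4\varepsilon_m$ since $A+B\leq 2$. Passing to the signed-difference class $\{\mathbf{1}[f(\cdot)=1]-\mathbf{1}[f_0(\cdot)=1]\}$ as you suggest removes one factor of $2$ (its restriction to the sample is determined by that of $H$, so Massart still applies with the same complexity), but the remaining factor from $A+B\leq 2$ does not disappear without further argument, so the clean constant $\kappa$ advertised in the statement is not recovered by your method as written. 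Given that the paper's own combination step is itself incomplete, this discrepancy is a defect of the constant only, not of the approach; you should simply state the bound you actually prove, namely $\mathcal{L}_{TV\star}^{\alpha}(\rho_f,\rho_{f_0})\leq\mathcal{L}_{TV\star}^{\alpha}(\tilde\rho_f,\tilde\rho_{f_0})+c\,\kappa\,\varepsilon_m$ with an explicit small constant $c$.
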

The proof of the theorem is delegated to the Appendix. This result characterizes that for any function $f$ in the hypothesis class, the loss $\mathcal{L}_{TV\star}^{\alpha}$ between the \textit{population output aggregates} obtained from $f$ and the target function can be bounded via the corresponding \textit{sample output aggregates}. We use the term aggregates to denote the mean over the indicator values $1_{[f(\rvx) = 1]}$.  
\cite{ijcai2017p232} provided one of the early results under the framework of \textit{Learnable from Label Proportions}, where they provided an in-depth analysis for the specific case which corresponds to $\alpha = 1$ in our formulation. 

\subsection{Auxiliary Loss}
\label{subsec:aux_method}
While the theorem above provides guarantees for the performance over aggregated output, in order to further improve the quantitative performance of the model, we follow the success of various works in the deep learning literature that utilize auxiliary losses along with the primary training objective for improved generalization of the model \cite{https://doi.org/10.48550/arxiv.1803.00144, zhu-etal-2022-enhanced, rei-yannakoudakis-2017-auxiliary}. This is directly tied to better representation learning for the model.\\
The category of \textit{Self Supervised Contrastive} (SSC) losses has emerged as one of the most promising representation learning paradigms in recent years \citep{TACL4049, meng-etal-2022-self} and thus a popular choice for auxiliary losses as well. The recent work of \cite{10.1145/3511808.3557293} proposed an SSC loss based on the embeddings of the instances from the penultimate layer of the model. The highlight of their proposed approach is that it \textit{does not require} explicit data augmentation or random sampling to construct negative pairs. This directly improves the runtime of the algorithm by a factor of $2$, assuming linearity, as compared to many other popular choices of SSC losses (we refer the reader to the survey of \cite{https://doi.org/10.48550/arxiv.2011.00362}). We therefore utilize their SSC objective to improve the overall performance of our formulation. Rewriting the model as $\clf = f^1(f^2(\rvx))$, where $f^1$ and $f^2$ are the parametrized sub-networks performing classification (only the final layer, ie, the classification head) and the representation learning respectively, the loss is given as
\begin{align}
\label{eq:LLPSelfCLR}
		\mathcal{L}_{SSC} = \frac{1}{|B_i|} \sum_{\rvx_j \in B_i} -log \frac{e^{s(f^2(\rvx_j), f^2(\rvx_j))}}{\sum_{\rvx_k \in B_i} e^{s(f^2(\rvx_j), f^2(\rvx_k))}}
\end{align}here $s$ is a similarity function between the embeddings, given by \textit{cosine}, $s(a, b) = \frac{a^T b}{|| a ||_2 || b ||_2}$.\\

Thus the overall training objective for our method is the following objective:
\begin{align}
\label{eq:final_loss}
		\mathcal{L} = \mathcal{L}_{TV\star}^{\alpha} + \lambda \mathcal{L}_{SSC}
\end{align}where $\lambda$ is the hyperparameter to control the auxiliary loss. The other hyperparameter in this objective is $\alpha$ under $\mathcal{L}_{TV\star}^{\alpha}$

\section{Experiments}
\label{sec:exp}
To investigate the effectiveness of the novel loss formulation in Eq. \ref{eq:final_loss}, we conduct extensive experiments across various models and datasets. These cover both short and long range texts as well as corresponding models developed to handle these texts. We begin by providing a brief overview of the setup.\\
\subsection{Setup}
\textbf{Models}: We consider the following 4 models: (i) \textit{BERT} proposed by \cite{devlin-etal-2019-bert} where we use a fully connected layer on the [CLS] token for classification. The documents are truncated at length of 256. (ii) \textit{RoBERTa} proposed by \cite{https://doi.org/10.48550/arxiv.1907.11692} where we use the similar setting as BERT to perform classification. (iii) \textit{Longformer} proposed by \cite{https://doi.org/10.48550/arxiv.2004.05150} is explicitly designed to handle long range documents via an attention mechanism linear in the sequence length. The input documents are truncated at the length of 2048. (iv) \textit{DistilBert} proposed by \cite{https://doi.org/10.48550/arxiv.1910.01108} is a distilled version of the primary BERT model with only around $40\%$ of the parameters but retaining up to $95\%$ performance. We truncate the sequences at the length of 512. All models are loaded using the Huggingface Transformers library \cite{https://doi.org/10.48550/arxiv.1910.03771}.

\textbf{Datasets}: We experiment on the following datasets: (i) \textit{Hyperpartisan News Detection} \cite{kiesel-etal-2019-semeval}, is a news dataset where the task is to classify an article having an extreme left or right wing standpoint. (ii) \textit{IMDB} \cite{maas-EtAl:2011:ACL-HLT2011}, is a movie review dataset for binary sentiment classification. (iii) \textit{Medical Questions Pairs} (Medical QP) \cite{mccreery2020effective}, contains pairs of questions where the task is to identify if the pair is similar or distinct. Following the common practice, we concatenate the questions in a given pair before tokenizing. (iv) \textit{Rotten Tomatoes} \cite{Pang+Lee:05a}, is a movie reviews dataset where each review is classified as positive or negative. Along with these, we also include a multi-class (non-binary) dataset \textit{Financial Phrasebank} \cite{Malo2014GoodDO} with 3 classes to assess the formulations under a more difficult setting. It is a sentiment dataset of sentences written by various financial journalists from financial news. The statistics of the datasets are provided in Table \ref{table:datasets}.

\textbf{Training}: We fine tune the pretrained versions of the 4 models under the LLP setup over the bags. The input to the models follows the structure of (input, label), where "input" is a bag $B_i$ and "label" is the ground truth proportion $\rho_i$. To construct the bags $B_i$ during training, we gather the instances in groups based on the size mentioned in Tables \ref{table:hyperparams} and \ref{table:hyperparams_2}, from the list of all training instances. To obtain the corresponding ground truth label proportions per bag $\rho_i$, we directly average the one-hot vector representations of the labels for the instances in the bag. Note that the ground truth labels \textit{are not used} at any other phase of the setup except testing which is performed at the instance level. There, the use of the ground truth labels is restricted to compare the performance of our formulation against the baselines. We construct new bags at each epoch by shuffling the data. This is inline with practical purposes where an instance can be shared across different bags across different training epochs, for example - advertising (CTR prediction) and medical records. The setup with fixed bags beforehand is much harder and potentially requires extremely large sized datasets \cite{10.1145/3511808.3557293}. Adhering to the memory constraints, we utilize one bag per iteration typically containing $16$ - $32$ instances (Tables \ref{table:hyperparams} and \ref{table:hyperparams_2}) which is the standard batch size range for fine-tuning. Although, there is no strict constraint for the number of bags per iteration and more bags can be utilized given sufficient memory. Most of the fine-tuning details and parts of code are retained from the recent work of \cite{park-etal-2022-efficient} which provides an extensive comparison of large scale models for standard supervised classification. Since we don't assume access to true labels even for the validation set, common techniques for parameter selection (based on performance on the validation set) are difficult to utilize. Following \cite{10.1145/3511808.3557293}, we thus use an aggregation of both training and validation losses across the last few epochs to tune the hyperparameters. Tables \ref{table:hyperparams} and \ref{table:hyperparams_2} detail the important hyperparameters. 
All the experiments were conducted on a single NVIDIA - A100 GPU. 

\textbf{Baselines}: While there are no prior works directly studying the setup of LLP for natural language classification tasks, we consider two existing approaches that can be readily applied here. The first is the formulation of \cite{Ardehaly_2017} which we refer as \textit{DLLP} and the corresponding loss as $\mathcal{L}_{DLLP}$, presented in Section \ref{sec:method}. The second is the work by \cite{10.1145/3511808.3557293} which uses the $\mathcal{L}_{DLLP}$ loss in conjunction with the contrastive auxiliary objective in Eq. \ref{eq:LLPSelfCLR}, which we will refer as \textit{LLPSelfCLR} baseline. They also propose an additional diversity penalty to further diversify the learned representations. Section \ref{sec:related_work} provides an elaborate overview of other works in the literature.

We use the \textbf{weighted} versions of \textit{precision, recall and F1} score to evaluate the performance of the models as these metrics quantify the performance in a holistic and balanced manner. The acronyms \textit{W-P}, \textit{W-R} and \textit{W-F1} for these metrics are used while describing the results. We perform the analysis experiments on diverse configurations in order to ensure coverage and completeness across the possible combinations.

\begin{table*}[t]
\scriptsize
\centering
\begin{tabular}{|c|c|ccc|ccc|ccc|ccc|}
\hline
Base Model & Formulation     & \multicolumn{12}{c|}{Dataset}                                                                                                                     \\
                                                      &            & \multicolumn{3}{c}{Hyperpartisan}                & \multicolumn{3}{c}{IMDB}                         & \multicolumn{3}{c}{Rotten Tomatoes}              &  \multicolumn{3}{c|}{Medical QP}                   \\
                                                      \cline{3-14}
                                                      &            & W-P            & W-R            & W-F1           & W-P            & W-R            & W-F1           & W-P            & W-R            & W-F1           & W-P            & W-R            & W-F1           \\
                                                                          \hline
\multirow{4}{*}{Longformer} & DLLP       & 34.17                & 58.46                & 43.13                & 86.06                & 85.89                & 85.87                & 64.66                & 55.25                & 46.71                & 51.45                & {\ul 50.41}          & 40.41                \\
                            & LLPSelfCLR & \textbf{79.94}       & {\ul \textit{75.38}} & {\ul \textit{72.99}} & {\ul \textit{90.39}} & {\ul \textit{90.38}} & {\ul \textit{90.38}} & {\ul \textit{83.87}} & {\ul \textit{83.42}} & {\ul \textit{83.37}} & {\ul \textit{53.18}} & \textbf{53.03}       & \textbf{52.55}       \\
                            & Ours       & {\ul \textit{79.92}} & \textbf{80.00}       & \textbf{79.94}       & \textbf{92.49}       & \textbf{94.48}       & \textbf{92.47}       & \textbf{85.61}       & \textbf{85.07}       & \textbf{85.01}       & \textbf{53.19}       & \textbf{53.03}       & {\ul \textit{52.52}} \\
                            \cline{2-14}
                            & Oracle     & 95.72                & 95.38                & 95.33                & 95.49                & 95.48                & 95.48                & 88.63                & 88.59                & 88.58                & 86.91                & 86.37                & 86.32                \\
                            \hline
                            &            &                      &                      &                      &                      &                      &                      &                      &                      &                      &                      &                      &                      \\
                            \hline
\multirow{4}{*}{RoBERTa}    & DLLP       & 34.17                & 58.46                & 43.13                & 89.03 & 89.01 & 89.01 & {\ul \textit{78.85}} & 69.29                & 66.39                & 25.08                & {\ul \textit{50.08}} & 33.42                \\
                            & LLPSelfCLR & {\ul \textit{54.34}} & {\ul \textit{53.84}} & {\ul \textit{54.04}} & {\ul \textit{90.08}}       & {\ul \textit{90.08}}       & {\ul \textit{90.08}}       & 77.42                & {\ul \textit{77.23}} & {\ul \textit{77.19}} & {\ul \textit{50.47}} & 49.18                & {\ul \textit{49.31}} \\
                            & Ours       & \textbf{61.06}       & \textbf{61.53}       & \textbf{56.21}       & \textbf{90.14}                & \textbf{90.13}               & \textbf{90.13}                & \textbf{85.41}       & \textbf{85.30}       & \textbf{85.29}       & \textbf{53.70}       & \textbf{53.53}       & \textbf{53.03}       \\
                            \cline{2-14}
                            & Oracle     & 69.61                & 64.61                & 64.39                & 93.58                & 93.58                & 93.57                & 88.47                & 88.45                & 88.44                & 83.62                & 82.92                & 82.83                \\
                            \hline
                            &            &                      &                      &                      &                      &                      &                      &                      &                      &                      &                      &                      &                      \\
                            \hline
\multirow{4}{*}{BERT}       & DLLP       & {\ul \textit{55.10}} & {\ul \textit{58.46}} & 45.69                & 84.33                & 84.33                & 84.33                & \textbf{75.04}       & \textbf{74.13}       & \textbf{73.89}       & 56.36                & 56.32                & 56.22                \\
                            & LLPSelfCLR & 50.71                & 56.92                & {\ul \textit{46.95}} & {\ul \textit{86.85}} & {\ul \textit{86.82}} & {\ul 86.83}          & \textbf{75.04}       & \textbf{74.13}       & \textbf{73.89}       & \textbf{66.08}       & \textbf{66.00}       & \textbf{65.96}       \\
                            & Ours       & \textbf{77.36}       & \textbf{63.07}       & \textbf{52.73}       & \textbf{87.58}       & \textbf{87.50}       & \textbf{87.49}       & {\ul \textit{74.94}} & {\ul \textit{73.80}} & {\ul \textit{73.49}} & {\ul \textit{64.54}} & {\ul \textit{64.53}} & {\ul \textit{64.52}} \\
                            \cline{2-14}
                            & Oracle     & 87.70                & 87.69                & 87.61                & 91.59                & 91.57                & 91.57                & 86.00                & 85.86                & 85.85                & 81.44                & 81.28                & 81.25                \\
                            \hline
                            &            &                      &                      &                      &                      &                      &                      &                      &                      &                      &                      &                      &                      \\
                            \hline
\multirow{4}{*}{DistilBERT} & DLLP       & 34.17                & 58.46                & 43.13                & {\ul \textit{86.89}} & {\ul \textit{86.86}} & {\ul \textit{86.85}} & {\ul \textit{79.63}} & {\ul \textit{77.41}} & {\ul \textit{76.98}} & 25.08                & 50.08                & 33.42                \\
                            & LLPSelfCLR & {\ul \textit{67.92}} & {\ul \textit{67.69}} & {\ul \textit{65.60}} & {\ul \textit{86.89}} & {\ul \textit{86.86}} & {\ul \textit{86.85}} & {\ul \textit{79.63}} & {\ul \textit{77.41}} & {\ul \textit{76.98}} & {\ul \textit{52.73}} & {\ul \textit{52.70}} & {\ul \textit{52.63}} \\
                            & Ours       & \textbf{70.21}       & \textbf{69.23}       & \textbf{66.93}       & \textbf{87.64}       & \textbf{87.62}       & \textbf{87.62}       & \textbf{79.69}       & \textbf{77.46}       & \textbf{77.03}       & \textbf{55.17}       & \textbf{55.17}       & \textbf{55.17}       \\
                            \cline{2-14}
                            & Oracle     & 92.56                & 92.30                & 92.22                & 92.81                & 92.82                & 92.81                & 83.76                & 83.75                & 83.75                & 77.83                & 76.35                & 76.04               \\
\hline                                                                       
\end{tabular}
\caption{Comparison of our formulation against the baseline methods DLLP and LLPSelfCLR. \textit{Oracle} denotes the performance in the supervised setting, the bag size of 1, and is demonstrated solely to quantify the difficulty of the problem. The best numbers are highlighted in bold and second best underlined. Our formulation achieves better results in almost $83\%$ of the configurations.}
\label{table:main_results}
\end{table*}

\begin{table}
\scriptsize
\centering
\begin{tabular}{|c|ccc|ccc|}
\hline
         Formulation  & \multicolumn{3}{c}{Longformer}                                     & \multicolumn{3}{c|}{RoBERTa}                                        \\
         \cline{2-7} 
           & W-P                  & W-R                  & W-F1                 & W-P                  & W-R                  & W-F1                 \\
           \hline
DLLP       & 69.97                & 70.18                & 65.85                & 76.97                & 73.91                & 72.44                \\
LLPSelfCLR & {\ul \textit{80.62}} & {\ul \textit{78.57}} & {\ul \textit{78.30}} & {\ul \textit{78.62}} & {\ul \textit{75.67}} & {\ul \textit{74.18}} \\
Ours       &           \textbf{81.38}           &       \textbf{79.50}               &        \textbf{79.06}              & \textbf{80.64}       & \textbf{77.43}       & \textbf{76.82} \\
\hline
Oracle     &          82.69            &    80.53                  &       80.43               &         82.64             &           80.84           &        81.18              \\
\hline
           &                      &                      &                      &                      &                      &                      \\
           & \multicolumn{3}{c|}{BERT}                                           & \multicolumn{3}{c|}{DistilBERT}                                     \\
           \cline{2-7}
           & W-P                  & W-R                  & W-F1                 & W-P                  & W-R                  & W-F1                 \\
           \hline
DLLP       & 63.32                & 61.90                & 58.91                & {\ul \textit{74.14}} & 68.42                & 63.92                \\
LLPSelfCLR & {\ul \textit{73.92}} & {\ul \textit{70.80}} & {\ul \textit{70.96}} & 72.43                & {\ul \textit{73.08}} & {\ul \textit{71.80}} \\
Ours       & \textbf{78.02}       & \textbf{75.67}       & \textbf{73.88}       & \textbf{76.58}       & \textbf{75.46}       & \textbf{72.75}       \\
\hline
Oracle     &          79.67            &       77.01               &            76.51          &         80.90             &            77.63          &      78.12      \\
\hline
\end{tabular}
\caption{Comparison of our formulation against the baselines on Financial Phrasebank dataset. The notations follow same meaning as described previously. Our formulation achieves better results across all the models.}
\label{table:main_results_2}
\end{table}

\subsection{Main Results}
Table \ref{table:main_results} highlights the main results of our complete loss formulation in Eq. \ref{eq:final_loss} and comparison against the baselines on the binary classification datasets. We also quantify the difficulty of the LLP setup by providing the results in the corresponding fully supervised setup, with bag size of 1, named as \textit{Oracle}. Note that Oracle is not a true feasible baseline here. The same result for DLLP and LLPSelfCLR in some configurations in the table correspond to the cases where LLPSelfCLR does not outperform DLLP baseline and the optimal choice for its hyperparameter controlling the contrastive loss is almost $0$. Same results for the DLLP baseline across the models such as RoBERTa, Longformer and DistilBERT on Hyperpartisan dataset, demonstrate that it is not able to optimize and learn properly. Similar observations are accounted for Medical QP dataset. \\
We see that our formulation achieves the best results in almost $83\%$ of the values in the table cumulating across all configurations in the precision, recall and F1 scores. In the settings where we don't outperform the baselines, the relative margins are extremely small, often in the order of the decimal precision. On the other hand, in the settings where we outperform the baselines, the relative gains are as large as up to $40\%$ (BERT on Hyperpartisan dataset) and up to $10\%$ (RoBERTa on Rotten Tomatoes and Hyperpartisan datasets) in weighted precision. Weighted Recall and F1-score mostly show similar gains in the corresponding configurations. In various configurations including DistilBERT and RoBERTa on Medical QP, Longformer on IMDB and Rotten Tomatoes), we also observe notable improvements ranging from $2\%$ to $9\%$ uniformly across the metrics. These results also align with the architectural designs and findings of previous works as Longformer achieves comparatively better performance especially on long-range datasets: Hyperpartisan and IMDB while DistilBERT exhibits more robust results compared to BERT. It is also important to note that the margins observed for our formulation are significant for the relatively small sized dataset Hyperpartisan which also generalizes to Medical QP to an extent. To summarize, the gains exhibited by our formulation are consistent across all models and datasets including small and large sized datasets with both long and short texts.  \\ 
The results for the Financial Phrasebank dataset are provided in Table \ref{table:main_results_2}. Similar to the results in the binary dataset configurations, we again observe that our formulation achieves consistently better results across the 4 models. Although the gains are not as high as $40\%$ which were observed in some binary settings, we still note significant improvements of around $23\%$ w.r.t to DLLP and $5.5\%$ w.r.t LLPSelfCLR on W-P for BERT model. Generally speaking, we observe around $2-3\%$ gains on the metrics across the models. These relative margins can be attributed to the fact that the multi-class setting with more than 2 classes is more challenging in both theory and practice.

\subsection{Analysis}

\begin{table}[]
\small
\centering
\begin{tabular}{|c|l|l|l|l|}
\hline
Dataset                                                                    &      & W-P                       & W-R                                & W-F1                      \\
\hline
{\multirow{3}{*}{Hyperpartisan}}                         & DLLP & 34.17 & 58.46          & 43.13 \\
                                                       & $\mathcal{L}_{TV\star}^{\alpha}$ & 76.80            & 61.53                   & 49.72            \\
\cline{2-5} 
                                                      & Overall &      79.92      &      80.00            & 79.94         \\
\hline
\multirow{3}{*}{\begin{tabular}[c]{@{}c@{}}Rotten\\ Tomatoes\end{tabular}} & DLLP & 64.66 & 55.25          & 46.71 \\
                                                                           & $\mathcal{L}_{TV\star}^{\alpha}$ & 84.37            & 84.32 & 84.31         \\
                                                                        \cline{2-5} 
                                                      & Overall &   85.61          &       85.07           &   85.01       \\                                       
                                                                           \hline
\end{tabular}
\caption{Comparison of DLLP loss against our proposed $\mathcal{L}_{TV\star}^{\alpha}$ in Eq. \ref{eq:loss1} (note this is \textit{without} the auxiliary loss of Eq. 
 \ref{eq:LLPSelfCLR}) for Longformer. Performance of the overall loss , Eq. \ref{eq:final_loss}, is also provided.}
\label{table:comp}
\end{table}

\subsubsection{Justification of $\mathcal{L}^{\alpha}_{TV\star}$and $\mathcal{L}_{SSC}$}
As stated in Section \ref{sec:method}, the DLLP loss admits various undesirable properties which motivated us to propose the first part of the loss function in Eq. \ref{eq:final_loss}. Here, we provide an empirical evidence that $\mathcal{L}_{TV\star}^{\alpha}$(Eq. \ref{eq:LBKL}) indeed generalizes better than $\mathcal{L}_{DLLP}$ under various configurations. Table \ref{table:comp} provides the results comparing the two formulations on Longformer model over Hyperpartisan and Rotten Tomatoes datasets. It is evident that our formulation achieves significantly better results of which particularly noteworthy are the metric values of W-P on Hyperpartisan, more than \textbf{2x}, and W-F1 on Rotten tomatoes, almost \textbf{1.8x}. Other values also highlight similar trends and noteworthy margins. More experiments are provided in appendix \ref{sec:app_analysis}. \\
Correspondingly, we can also observe the practical benefits of using SSC objective by comparing the results of $\mathcal{L}_{TV\star}^{\alpha}$against the overall loss formulation of Eq. \ref{eq:final_loss} in Table \ref{table:comp}.
We note that for Hyperpartisan, the relative improvements brought via the SSC objective are more notable in the W-F1 score. On the other hand, for Rotten Tomatoes, we observe only marginal improvements using SSC. This demonstrates that while $\mathcal{L}_{TV\star}^{\alpha}$ leads to strong results, using an auxiliary objective such as an SSC loss can further improve the performance, notably or marginally.

\subsubsection{Variation of Bag size}
The size of the input bags is a direct control factor for the performance in the LLP setup. We thus conduct an experiment to note the performance of our method against the baselines for varying bag sizes. Previous works \cite{NEURIPS2019_4fc84805, https://doi.org/10.48550/arxiv.1905.12909} have exhaustively demonstrated that the performance of the models degrade as the bag size increases. This is due to \textit{reduction} in the supervision available to train the models. Figure \ref{fig:analysis_first} plots the comparison for the different formulations for RoBERTa on Rotten Tomatoes dataset. These results align with the findings in the literature accounting for the amount of supervision provided. It is noteworthy that our method is more stable as compared to the baselines. For smaller bag sizes ($2$ - $8$), all the models demonstrate fairly competitive performance. For the size $16$ DLLP performs closer to our formulation however  LLPSelfCLR incurs non-trivial degradation. For larger bag size, the performance reduces further (significantly for DLLP) and the gap between our formulation to the baselines admit large difference. This result is thus inline with the claims presented for our formulation in Section \ref{sec:method}. 



\begin{figure*}%
    \centering
    
     \subfigure[Variation of Bag Size]{%
    \label{fig:analysis_first}%
    \includegraphics[width=50mm,height=40mm]{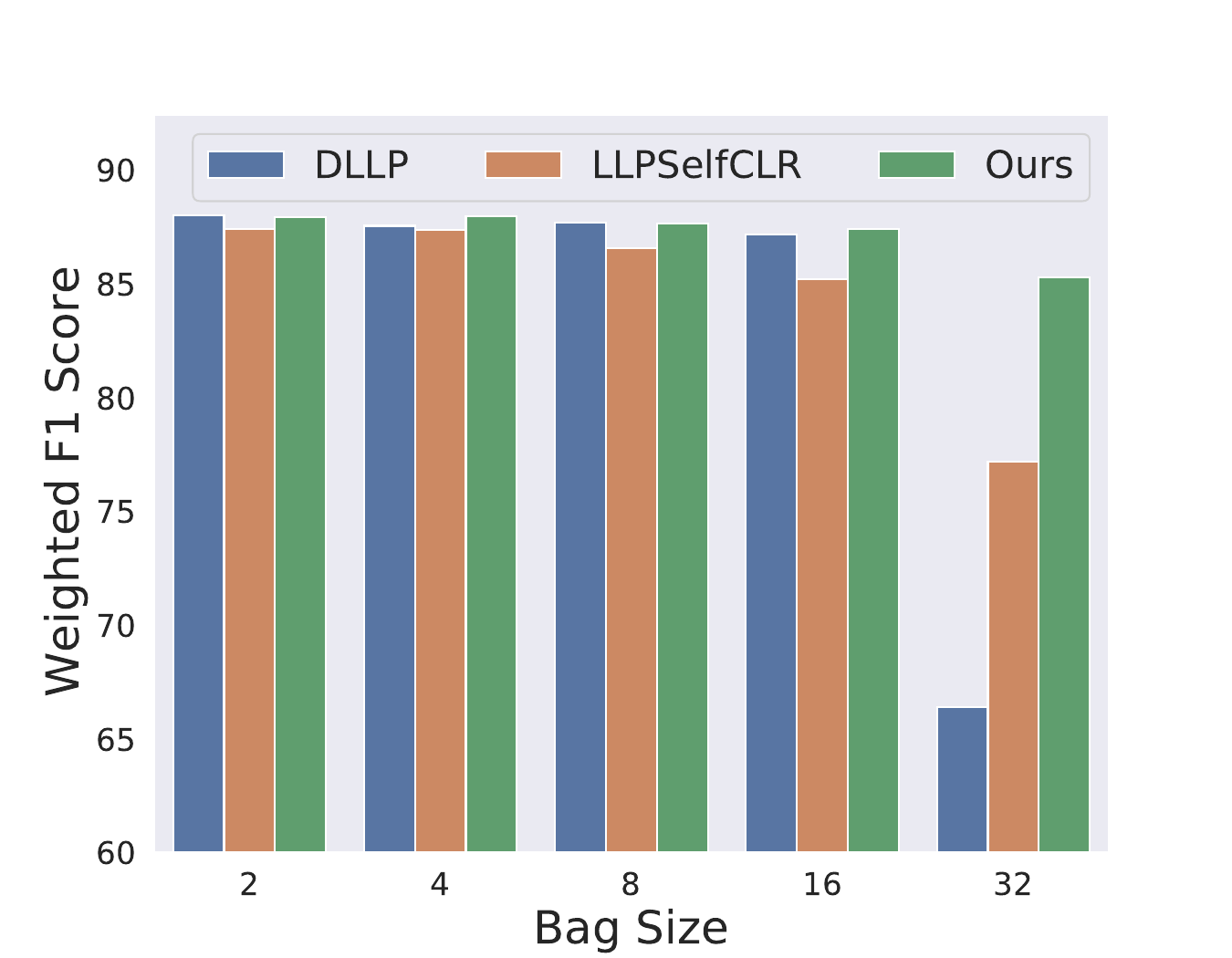}}%
    \quad
    \subfigure[Variation of $\alpha$]{%
    \label{fig:analysis_second}%
    \includegraphics[width=50mm,height=40mm]{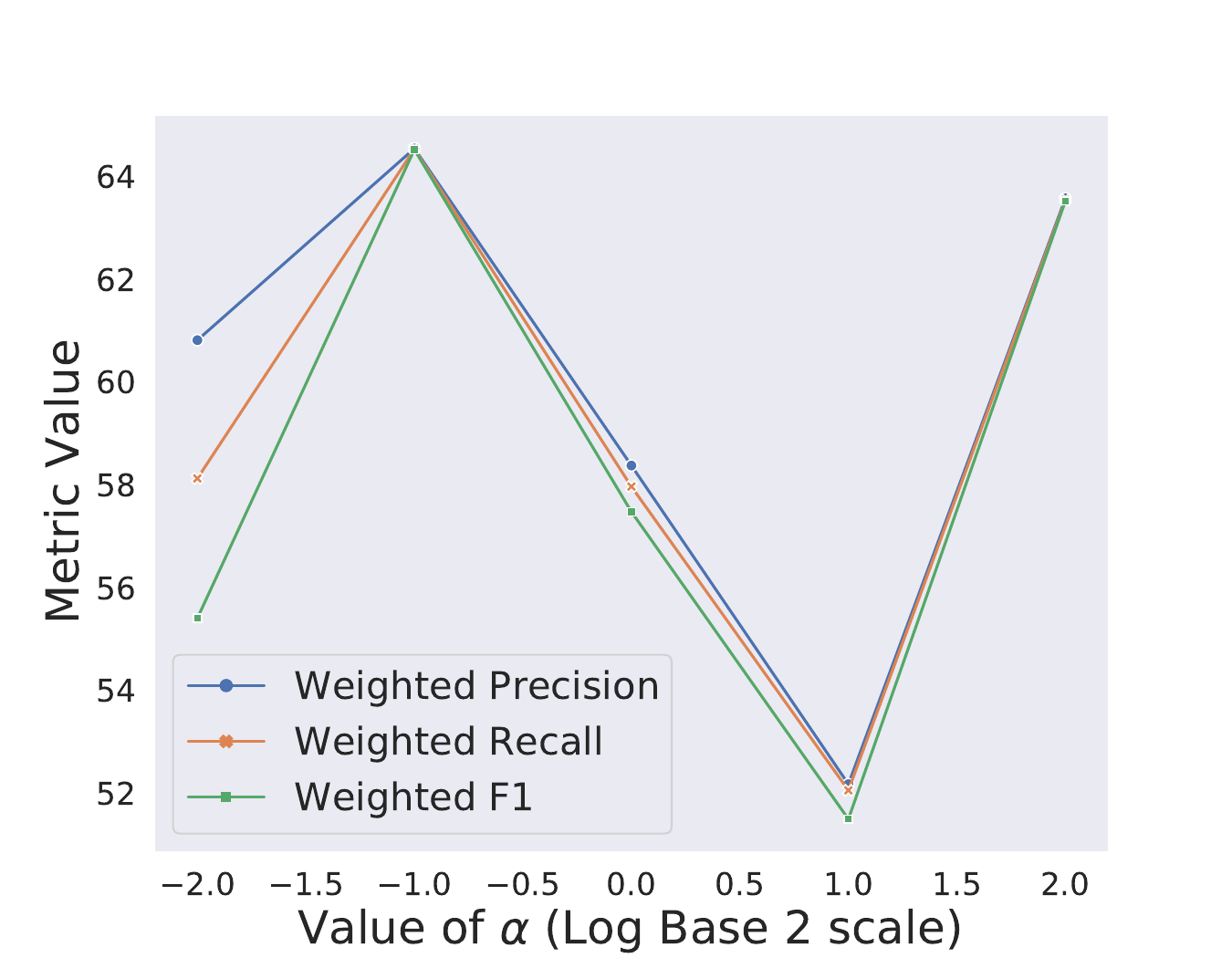}}%
    \quad
    \subfigure[Variation of $\lambda$]{%
    \label{fig:analysis_third}%
    \includegraphics[width=50mm,height=40mm]{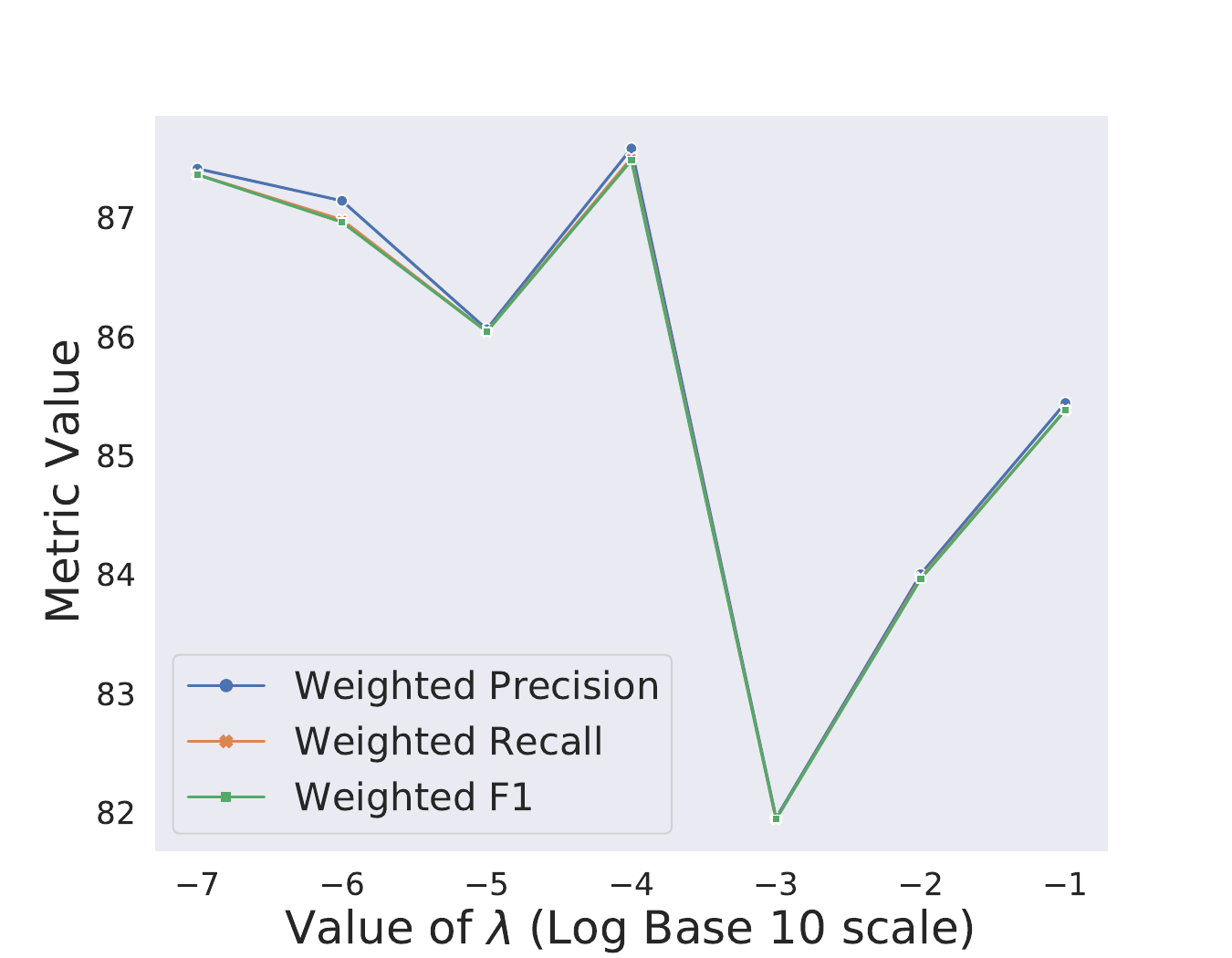}}%
\caption{Variation of Bag size (subfigure a): For increasing bag size, we observe substantial difference between our method and the baseline formulations. Analysis of Hyperparameters $\alpha$ (subfigure b) and $\lambda$ (subfigure c): We don't observe a direct trend however the performance is well calibrated across the metrics. }
\label{fig:analysis}
\end{figure*}

\subsubsection{Variation of hyperparameters}
To analyze the effect of the hyperparameters on the performance, we conduct experiments by varying the values of $\alpha$ and $\lambda$, Eq. \ref{eq:final_loss}. While varying the value of one, the other is kept constant. 
The results are plotted in Figures \ref{fig:analysis_second} and \ref{fig:analysis_third}. The analysis for $\alpha$ is performed over BERT on Medical QP and for $\lambda$ over BERT on IMDB to ensure diverse coverage. For both the subfigures corresponding to $\alpha$ and $\lambda$, it is non-trivial to find a direct trend. This aligns with many works in various machine learning literatures since the hyperparameters play a key role in determining the output performance and a small variation can account for large deviations in the model performance. In both the configurations, we first observe the optimal performance point as we increase the value of the corresponding hyperparameter followed by a sharp degradation and further improvement. However, it is noteworthy that for almost all the cases, all the 3 metrics provide highly calibrated results (ie, highly balanced results) across the y-axis, thus further verifying the efficacy of our approach. 

\section{Related Work}
\label{sec:related_work} 
We discuss the literature of LLP via the following taxonomy:\\
\textbf{Shallow Learning based}: This is the category of models that doesn't leverage deep learning models. The notable works include \cite{10.1109/ICDM.2007.50} (using SVM, k-NN), \cite{https://doi.org/10.48550/arxiv.1207.1393, 10.1016/j.patcog.2013.05.002} (probabilistic models), \cite{NEURIPS2020_fcde1491} (mutual contamination framework) etc. Note that these methods do not scale for higher dimensional and larger datasets, thus making these unfit for language tasks.

\textbf{Deep Learning based}: Some of the notable works include \citep{Ardehaly_2017} and \citep{https://doi.org/10.48550/arxiv.1905.12909} which aim to minimize a divergence between the true proportions $\rho$ and the predicted proportions of a given bag $\pred$. \cite{https://doi.org/10.48550/arxiv.1905.12909} also used the concept of \textit{pseudo labeling} \cite{article_pseudolabelling}, where an estimate of the individual labels within each bag is jointly optimized with the model during training. \cite{ijcai2021p377} revisited pseudo labeling for LLP by proposing a two-stage scheme that alternately updates the network parameters and the pseudo labels. However, both these works leveraging pseudo labeling perform experiments with computer vision datasets only. Furthermore, open source codes could not be found thus making it difficult to compare against these formulations.

\textbf{Auxiliary Loss based}: Recent trends in learning with auxiliary losses alongside a primary objective also gained attention in the LLP community. \cite{NEURIPS2019_4fc84805} proposed LLP-GAN framework by incorporating generative adversarial networks (GAN). \cite{https://doi.org/10.48550/arxiv.1910.13188} proposed LLP-VAT that incorporated an additional consistency regularizer to learn consistent representations in the setting where the input instances are perturbed adversarially. It is noteworthy that both these methods leverage vision datasets for experiments. The tasks of generation and adversarial perturbation incur significant overhead for NLP and in various cases, this overhead can be larger than the primary objective task itself.  For example, the generation of new instances via GANs. \cite{10.1145/3511808.3557293} proposed a contrastive self-supervised auxiliary loss to improve the representation learning of the underlying deep network in conjunction with the DLLP loss. They augment this with an extra penalty term to further diversify the representations learned.

Lastly, to our knowledge, the only work that emphasizes the setting of LLP for natural language classification tasks is the work by \cite{10.5555/3061053.3061132}. However, it is out of the scope of this work as they aim to improve the performance of models under the LLP setup when the training and testing data admit change in the underlying domain distributions, ie, the Domain adaptation setting.

\section{Conclusion}
\label{sec:conclusion}
We study the setup of learning under label proportions for natural language classification tasks. The LLP setup admits two  properties: Privacy centric learning and Weak Supervision that render it as a highly practical modern research topic. We first discuss the shortcomings of one of the first LLP loss formulations proposed for learning deep neural networks. These provide the motivation for our novel loss formulation, termed as $\mathcal{L}_{TV\star}^{\alpha}$, that does not admit such irregularities. The formulation is also supported with theoretical justification and hardness results. Empirically, we justify that the new loss function achieves better performance on diverse configurations. The main results achieved by combining this loss with a self-supervised contrastive formulation achieve significantly better performance as compared to the baselines.

\section{Limitations}
\label{sec:limitations}
While the proposed formulation achieves better results in the majority of the configurations, the gap between the best results and the corresponding supervised oracle is large in some cases. This can be configuration dependent and thus harder to interpret. Another limitation which follows from the black-box nature of the large models is that the current theoretical understanding of the proposed formulation are not sufficient especially at the interplay between $\mathcal{L}_{TV\star}$ and $\mathcal{L}_{SSC}$ in eq \ref{eq:final_loss}. More work in this direction will help unfold not only the underlying mechanisms but also aid in designing better loss formulations.

\bibliography{anthology,custom}
\bibliographystyle{acl_natbib}

\newpage
\appendix

\newpage

\section{Appendix}

\subsection{Additional Experiments}
This section contains additional experiments to support the proposed loss formulation as well as further analyse the hyperparameter variation. The diverse selection of model and datasets in the following experiments provide sufficient coverage across the configurations.

\subsubsection{Analysis of $\mathcal{L}^{\alpha}_{TV\star}$}
\label{sec:app_analysis}
Table \ref{table:app_comp} reports the comparison of DLLP against $\mathcal{L}^{\alpha}_{TV\star}$ on more datasets over longformer model. We note that the findings from table \ref{table:comp} extend to other configurations as well.
\begin{table}[]
\small
\centering
\begin{tabular}{|c|l|l|l|l|}
\hline
Dataset                                                                    &      & W-P                       & W-R                                & W-F1                      \\
\hline
{\multirow{3}{*}{IMDB}}                         & DLLP & 86.06 & 85.89          & 85.87 \\
                                                       & $\mathcal{L}_{TV\star}^{\alpha}$ & 88.40            & 88.36                   & 88.36            \\
\cline{2-5} 
                                                      & Overall &      92.49     &      94.48           & 92.47       \\
\hline
\multirow{3}{*}{\begin{tabular}[c]{@{}c@{}}Financial\\ Phrasebank\end{tabular}} & DLLP & 69.97 & 70.18          & 65.85 \\
                                                                           & $\mathcal{L}_{TV\star}^{\alpha}$ &    76.36         &  74.94   &  71.48  \\
                                                                        \cline{2-5} 
                                                      & Overall &   81.38        &     79.50          &  79.06       \\                                       
                                                                           \hline
\end{tabular}
\caption{Comparison of DLLP loss against our proposed $\mathcal{L}_{TV\star}^{\alpha}$ in Eq \ref{eq:loss1} (note this is \textit{without} the auxiliary loss of Eq 
 \ref{eq:LLPSelfCLR}) for Longformer. Performance of the overall loss , Eq \ref{eq:final_loss}, is also provided.}
\label{table:app_comp}
\end{table}

\subsubsection{Variation of hyperparameters}
The variation of $\alpha$ value has been plotted in figure \ref{fig:analysis_second_app} for BERT on Hyperpartisan dataset. \\
The variation of $\lambda$ value has been plotted in figure \ref{fig:analysis_third_app} for RoBERTa on Financial Phrasebank dataset.

\begin{figure*}%
    \centering
    
    \subfigure[Variation of $\alpha$]{%
    \label{fig:analysis_second_app}%
    \includegraphics[width=50mm,height=40mm]{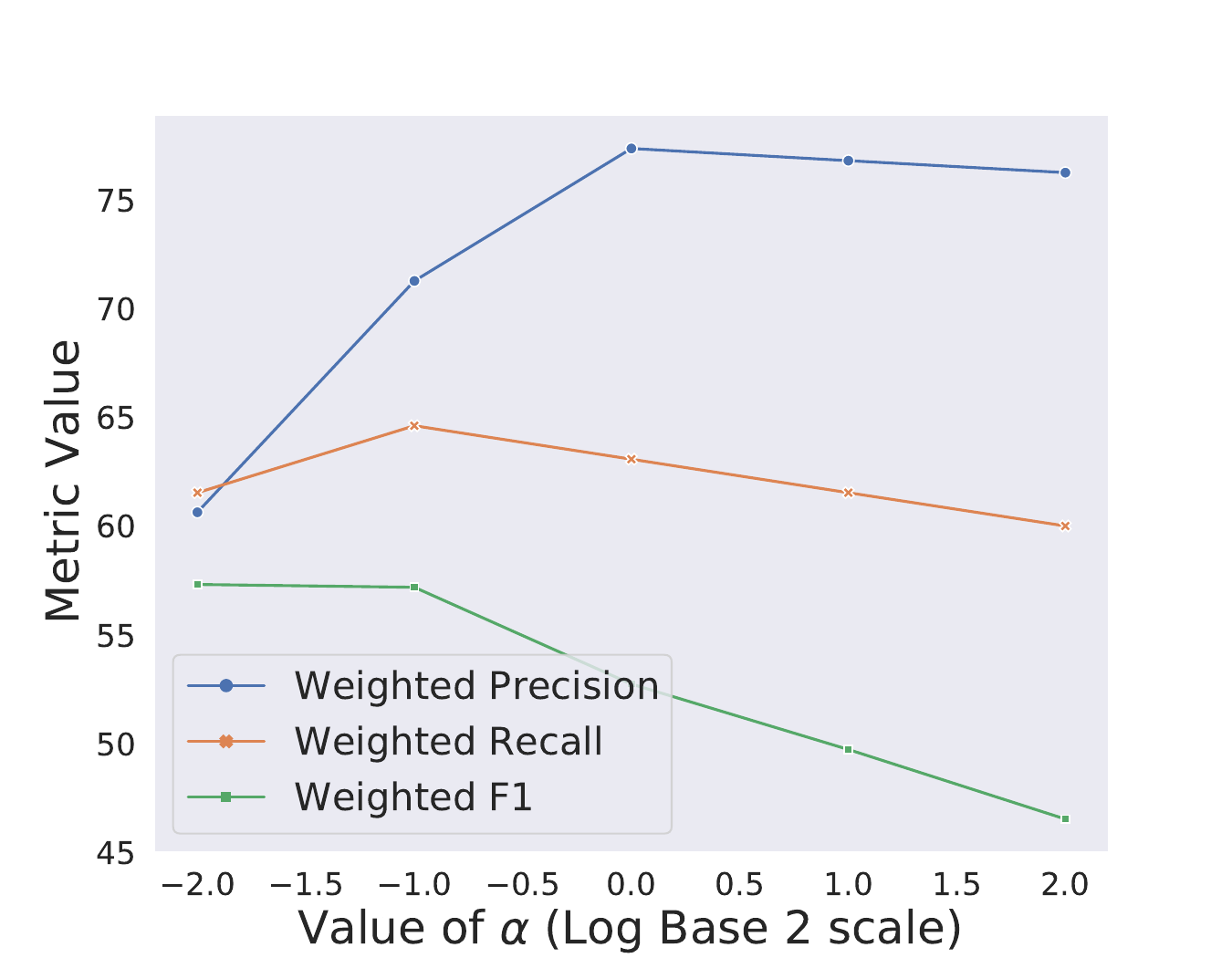}}%
    \quad
    \subfigure[Variation of $\lambda$]{%
    \label{fig:analysis_third_app}%
    \includegraphics[width=50mm,height=40mm]{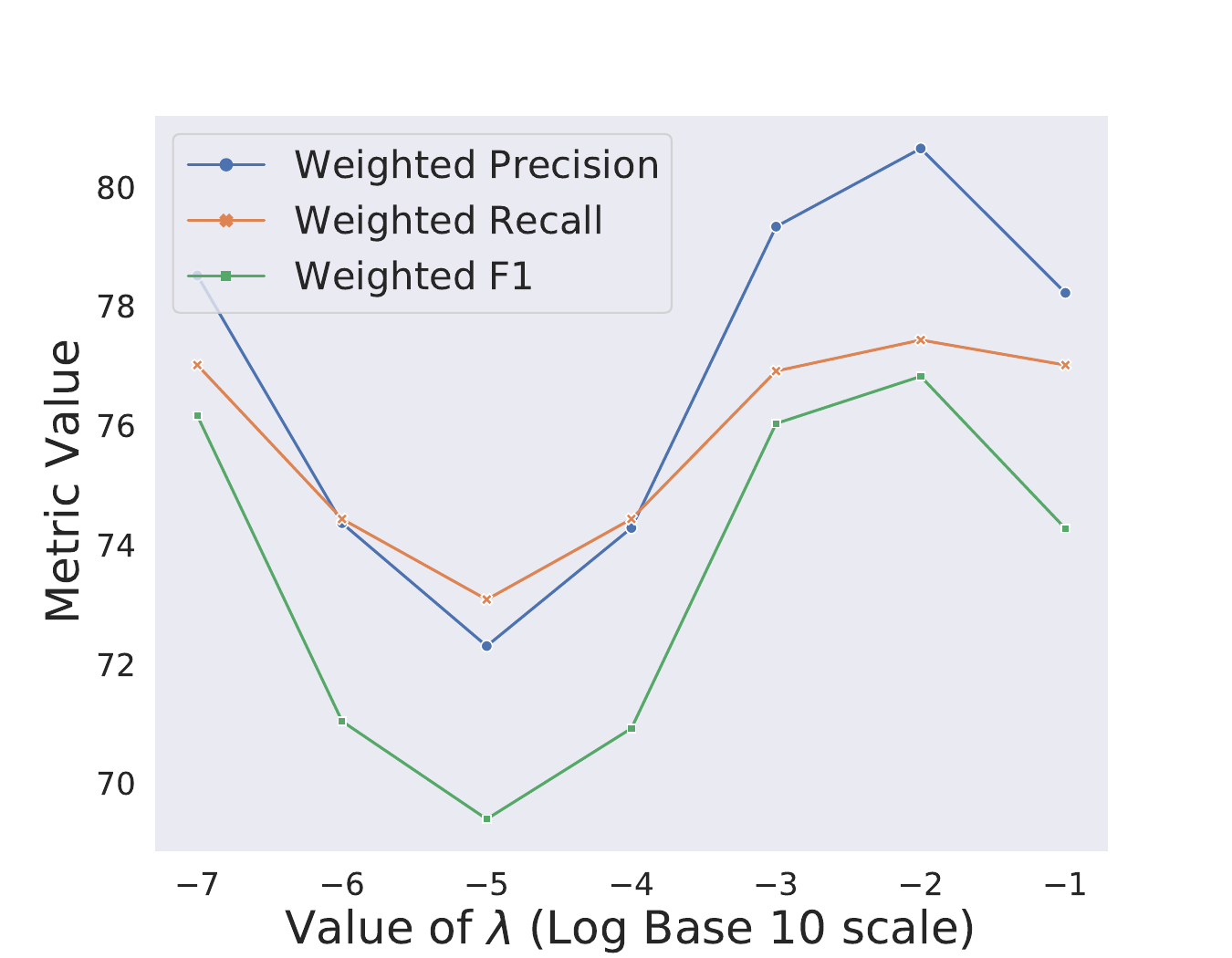}}%
\caption{Variation of hyperparameters.}
\label{fig:analysis}
\end{figure*}

\subsection{Properties of $\mathcal{L}^{\alpha}_{TV\star}$}
\label{sec:appendix_proofs}
Here, we discuss the theoretical properties that the proposed loss formulation $\mathcal{L}^{\alpha}_{TV\star}$ admits. The following lemmas demonstrate that it does not exhibit any of the irregularities highlighted in Section \ref{subsec:motivation} for a broad range of values of $\alpha$ used in majority of experiments .

\begin{lemma}
\label{lemma:property_1} $\mathcal{L}_{TV\star}^{\alpha}$ admits an absolute upper bound for all inputs pairs $\rho$ and $\pred$ for $\alpha \geq 1$.
\end{lemma}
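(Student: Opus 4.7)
The plan is to chain two elementary inequalities, both of which hinge on $\rho$ and $\pred$ lying in the probability simplex $\Delta$ over $\mathcal{C}$. First I would observe that since $\rho^c, \pred^c \in [0,1]$, we have $|\rho^c - \pred^c| \le 1$ for every $c \in \mathcal{C}$; combined with the hypothesis $\alpha \ge 1$, the numerical inequality $t^{\alpha} \le t$ valid on $[0,1]$ then gives, coordinate-wise and after summation,
\[
\sum_{c \in \mathcal{C}} |\rho^c - \pred^c|^{\alpha} \;\le\; \sum_{c \in \mathcal{C}} |\rho^c - \pred^c|.
\]

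Second, I would bound the right-hand side by the triangle inequality together with the fact that $\rho$ and $\pred$ are probability mass functions, which yields
\[
\sum_{c \in \mathcal{C}} |\rho^c - \pred^c| \;\le\; \sum_{c \in \mathcal{C}} \rho^c + \sum_{c \in \mathcal{C}} \pred^c \;=\; 2;
\]
this is essentially the standard statement that twice the total variation distance between two probability measures is at most $2$. Chaining the two steps produces $\sum_c |\rho^c - \pred^c|^{\alpha} \le 2$.

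Finally, I would raise this inequality to the power $2/\alpha > 0$ (monotone on non-negative reals) and multiply by $1/2$ to conclude
\[
\mathcal{L}_{TV\star}^{\alpha} \;=\; \frac{1}{2}\left(\sum_{c \in \mathcal{C}} |\rho^c - \pred^c|^{\alpha}\right)^{2/\alpha} \;\le\; \frac{1}{2}\cdot 2^{2/\alpha} \;=\; 2^{2/\alpha - 1},
\]
which is the desired absolute upper bound, finite for every fixed $\alpha \ge 1$.

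There is no substantive obstacle. The one delicate point is that the step $t^{\alpha} \le t$ genuinely requires \emph{both} $\alpha \ge 1$ and $t \le 1$, which is precisely why the lemma's hypothesis reads $\alpha \ge 1$ and why the probability-simplex structure of $\rho$ and $\pred$ is essential; for $\alpha < 1$ the same chain of inequalities breaks in the very first step. As a pleasant consistency check, the resulting constant $2^{2/\alpha - 1}$ coincides with the prefactor $\kappa$ appearing in Theorem~\ref{theorem:main}, suggesting that this boundedness estimate is exactly the mechanism through which $\kappa$ enters the generalization bound.
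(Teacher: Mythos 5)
Your proof is correct and takes essentially the same route as the paper's: both reduce to the $\alpha=1$ case via the hypothesis $\alpha \ge 1$ and then use that the $\ell^1$ distance between two probability vectors is at most $2$ (equivalently, $D_{TV}\le 1$). The only difference is cosmetic --- the paper invokes the norm monotonicity $\|\cdot\|_\alpha \le \|\cdot\|_1$ and stops at the bound $2$, whereas your termwise estimate $t^{\alpha}\le t$ on $[0,1]$ yields the marginally sharper (and tight) constant $2^{2/\alpha-1}$.
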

\begin{proof}
	For $\alpha \geq 1$, using that $D_{TV} \leq 1$, we have that $\mathcal{L}_{TV\star}^{\alpha} \leq 2 \times D_{TV}^2 \leq 2$
\end{proof}

\begin{lemma}
\label{lemma:property_2} $\mathcal{L}_{TV\star}^{\alpha}$ is symmetric.
\end{lemma}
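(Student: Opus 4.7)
The plan is to prove symmetry directly from the definition in Eq.~\ref{eq:loss1}, which is essentially a one-line argument since symmetry is manifest in the formula. The key observation is that the only place $\rho$ and $\pred$ enter the expression is through the term $|\rho^c - \pred^c|^{\alpha}$ inside the sum over classes $c \in \mathcal{C}$, and the absolute value function satisfies $|a - b| = |b - a|$ for all real $a, b$.

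Concretely, I would first write down $\mathcal{L}_{TV\star}^{\alpha}(\pred, \rho)$ obtained by swapping the two arguments of the original definition. Then I would apply the identity $|\pred^c - \rho^c|^{\alpha} = |\rho^c - \pred^c|^{\alpha}$ term-by-term inside the summation, which is valid for any $\alpha > 0$ since the absolute value is taken before exponentiation. Since all other operations (summation over $c$, raising to the power $2/\alpha$, and multiplication by $1/2$) depend only on the resulting sum, the final value is unchanged. Conclude that $\mathcal{L}_{TV\star}^{\alpha}(\rho, \pred) = \mathcal{L}_{TV\star}^{\alpha}(\pred, \rho)$.

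There is no substantive obstacle here; the lemma is an immediate consequence of the symmetry of $|\cdot|$ and the fact that $\alpha$ appears only as an exponent applied after the absolute value. The only minor subtlety worth flagging (though not strictly necessary for the statement) is that the argument holds for arbitrary $\alpha > 0$, not only the range $\alpha \geq 1$ required in Lemma~\ref{lemma:property_1}, so the symmetry property does not impose additional constraints on the hyperparameter.
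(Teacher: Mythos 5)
Your proof is correct and matches the paper's own argument, which likewise observes that symmetry is immediate from the definition since the two arguments enter only through $|\rho^c - \pred^c|$. Your added remark that the property holds for all $\alpha > 0$ is a harmless and accurate elaboration.
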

\begin{proof}
By definition of $\mathcal{L}_{TV\star}^{\alpha}$, we have $\mathcal{L}_{TV\star}^{\alpha}(\rho, \pred) = \mathcal{L}_{TV\star}^{\alpha}(\pred, \rho)$
\end{proof}

\begin{lemma}
\label{lemma:property_3} $\mathcal{L}_{TV\star}^{\alpha}$ is Lipschitz continuous in both the arguments for $\alpha \geq 1$. 
\end{lemma}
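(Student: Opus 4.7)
The plan is to recognize the loss as (half the) squared $\ell^\alpha$ norm of the residual: $\mathcal{L}_{TV\star}^{\alpha}(\rho, \pred) = \tfrac{1}{2}\|\rho - \pred\|_\alpha^2$, where $\|\cdot\|_\alpha$ is a genuine norm precisely when $\alpha \geq 1$. Once this reformulation is in place, Lipschitz continuity reduces to combining the reverse triangle inequality with the fact that the simplex is bounded.

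First I would fix one argument, say $\pred$, and bound
$$|\mathcal{L}_{TV\star}^{\alpha}(\rho_1, \pred) - \mathcal{L}_{TV\star}^{\alpha}(\rho_2, \pred)| = \tfrac{1}{2}\bigl|\|\rho_1 - \pred\|_\alpha^2 - \|\rho_2 - \pred\|_\alpha^2\bigr|$$
by factoring the difference of squares and applying the reverse triangle inequality to obtain
$$\bigl|\|\rho_1 - \pred\|_\alpha - \|\rho_2 - \pred\|_\alpha\bigr| \leq \|\rho_1 - \rho_2\|_\alpha.$$

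Next I would bound the remaining factor $\|\rho_1 - \pred\|_\alpha + \|\rho_2 - \pred\|_\alpha$ using that $\rho_i, \pred$ lie in the probability simplex: by the monotonicity of $\ell^p$ norms in finite dimensions ($\|x\|_\alpha \leq \|x\|_1$ for $\alpha \geq 1$) and $\|\rho - \pred\|_1 \leq 2$, each term is at most $2$. Putting the pieces together yields
$$|\mathcal{L}_{TV\star}^{\alpha}(\rho_1, \pred) - \mathcal{L}_{TV\star}^{\alpha}(\rho_2, \pred)| \leq 2\,\|\rho_1 - \rho_2\|_\alpha,$$
so the loss is Lipschitz continuous in $\rho$ with constant $K = 2$ in the $\ell^\alpha$ metric (and with an adjusted constant in any equivalent norm). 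By the symmetry established in Lemma~\ref{lemma:property_2}, the same bound governs Lipschitz continuity in $\pred$.

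There is no substantive obstacle here; the only delicate point is the role of $\alpha \geq 1$, which is needed both for $\|\cdot\|_\alpha$ to satisfy the triangle (and hence reverse triangle) inequality and for the $\ell^\alpha \leq \ell^1$ comparison that controls the diameter of the simplex. For $\alpha < 1$ the ``norm'' fails to be subadditive and the argument breaks down, consistent with the restriction in the lemma.
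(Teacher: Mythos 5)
Your proof is correct, but it takes a genuinely different route from the paper's. The paper differentiates: it computes $\partial \mathcal{L}_{TV\star}^{\alpha}/\partial \rho_1^i$ explicitly, asserts that the resulting gradient norm is bounded by a constant depending only on the number of classes, and concludes Lipschitz continuity from boundedness of the gradient. Your argument is purely algebraic: writing the loss as $\tfrac{1}{2}\|\rho-\pred\|_\alpha^2$, factoring the difference of squares, and combining the reverse triangle inequality with the simplex bound $\|\rho-\pred\|_\alpha\le\|\rho-\pred\|_1\le 2$. Your route has two advantages. First, it sidesteps the non-differentiability of the $\ell^\alpha$ norm on the set where coordinates of $\rho_1-\rho_2$ vanish; the paper only claims differentiability almost everywhere, and passing from an a.e.\ gradient bound to a global Lipschitz estimate requires an extra convexity or absolute-continuity-along-segments argument that the paper leaves implicit. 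Second, it yields an explicit constant $K=2$ in the $\ell^\alpha$ metric, whereas the paper's bound is left as an unspecified $\mathcal{O}(\kappa(C))$ (and for $\alpha>2$ the boundedness of the paper's gradient expression is not ``trivial'' --- the factor $(\sum_i|\rho_1^i-\rho_2^i|^\alpha)^{2/\alpha-1}$ diverges as $\rho_1\to\rho_2$ and must be cancelled against the second factor). What the paper's computation buys in exchange is the gradient formula itself, which it reuses in the subsequent smoothness lemma. Your closing observation that $\alpha\ge 1$ is exactly what makes $\|\cdot\|_\alpha$ subadditive correctly identifies where the hypothesis enters.
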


\begin{proof}

Given the two arguments as discrete distributions $\rho_1, \rho_2$, we prove the result for $\rho_1$ and by symmetry it follows in both arguments.\\
We have that $\mathcal{L}_{TV\star}^{\alpha}(\rho_1, \rho_2)$ is differentiable almost everywhere in the interior of the simplex. We compute the gradient of $\mathcal{L}_{TV\star}^{\alpha}(\rho_1, \rho_2)$ w.r.t to $\rho_1^i$, corresponding to the $i^{th}$ element in the support,
    \begin{align}
    	\label{eq:grad_L_TVstar}
		   \frac{ \partial \mathcal{L^{\alpha}_{TV\star}}(\rho_1, \rho_2) }{\partial \rho_1^i} & = \left(\sum_{i} |\rho_1^i - \rho_2^i|^{\alpha}\right)^{\frac{2}{\alpha} - 1} \nonumber \\
           & \times |\rho_1^i - \rho_2^i|^{\alpha - 1} \times sign(\rho_1^i - \rho_2^i)
    \end{align}
    Here, $sign(\cdot)$ is the sign function.\\
    To show Lipschitz continuity, it is sufficient to show that the norm of the gradient, $||\nabla \mathcal{L}_{TV\star}^{\alpha}(\rho_1, \rho_2)||$, is bounded. \\
    Now,
    \begin{align}
    \label{eq:full_grad_L_TVstar}
    	||\nabla \mathcal{L}_{TV\star}^{\alpha}(\rho_1, \rho_2)|| = & \left(\sum_{i} |\rho_1^i - \rho_2^i|^{\alpha}\right)^{\frac{2}{\alpha} - 1} \nonumber \\
        & \sqrt{\sum_i |\rho_1^i - \rho_2^i|^{2\alpha - 2}}
    \end{align}
    Given a fixed number of classes, it is trivial that
    \begin{align}
    		||\nabla \mathcal{L}_{TV\star}^{\alpha}(\rho_1, \rho_2)|| = \mathcal{O}(\kappa(C))
    \end{align}
    Here, $\kappa(C)$ is a constant function of the number of classes. Thus, Lipschitz continuity holds.
 

\end{proof}

\begin{lemma}
\label{lemma:property_4} $\mathcal{L}_{TV\star}^{\alpha}$ is Lipschitz smooth in both the arguments for $\alpha \geq 1$. 
\end{lemma}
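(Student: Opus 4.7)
The plan is to push the calculation of Lemma \ref{lemma:property_3} one derivative deeper. To establish Lipschitz smoothness it suffices to show that the Hessian $\nabla^2 \mathcal{L}_{TV\star}^{\alpha}$ has operator norm bounded uniformly on the simplex by a constant depending only on $C$ and $\alpha$; integrating the Hessian along line segments of the convex simplex then delivers Lipschitz continuity of $\nabla \mathcal{L}_{TV\star}^{\alpha}$, which is the desired smoothness property. Since $\mathcal{L}_{TV\star}^{\alpha}$ is $C^2$ almost everywhere in the interior of the simplex, a pointwise bound wherever the Hessian exists will suffice, in the same almost-everywhere sense already used in Lemma \ref{lemma:property_3}.

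The key step is to differentiate Eq.~\ref{eq:grad_L_TVstar} a second time and then reuse the substitution $|\rho_1^i - \rho_2^i| \le S^{1/\alpha}$, where $S := \sum_c |\rho_1^c - \rho_2^c|^{\alpha}$ for brevity. The Hessian produces two kinds of entries: a rank-one ``outer'' contribution from differentiating the prefactor $S^{2/\alpha-1}$, namely $(2-\alpha)\,S^{2/\alpha-2}\,|\rho_1^i-\rho_2^i|^{\alpha-1}\,|\rho_1^k-\rho_2^k|^{\alpha-1}\,\text{sign}(\rho_1^i-\rho_2^i)\text{sign}(\rho_1^k-\rho_2^k)$, and, on the diagonal only, an additional contribution $(\alpha-1)\,S^{2/\alpha-1}\,|\rho_1^i-\rho_2^i|^{\alpha-2}$ from differentiating $|\rho_1^i-\rho_2^i|^{\alpha-1}\text{sign}(\rho_1^i-\rho_2^i)$. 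Plugging $|\rho_1^i-\rho_2^i| \le S^{1/\alpha}$ into the first type collapses the exponents to $S^{2/\alpha - 2 + 2(\alpha-1)/\alpha} = S^{0} = 1$, so those $\mathcal{O}(C^2)$ entries contribute only a constant to the Frobenius (and hence operator) norm.

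The main obstacle is the diagonal piece $S^{2/\alpha-1}|\rho_1^i-\rho_2^i|^{\alpha-2}$. For $\alpha \ge 2$ the same substitution gives $S^{2/\alpha - 1 + (\alpha-2)/\alpha} = S^{0} = 1$, which closes the argument cleanly. For $1 < \alpha < 2$, however, the exponent $\alpha-2$ is negative and the inequality $|\rho_1^i - \rho_2^i| \le S^{1/\alpha}$ now bounds $|\rho_1^i - \rho_2^i|^{\alpha-2}$ from below rather than above; the entry genuinely blows up when one coordinate vanishes while other coordinates keep $S$ bounded away from $0$, and indeed $d \mapsto |d|^{\alpha-1}\text{sign}(d)$ is only H\"older-$(\alpha-1)$, not Lipschitz. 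My plan there is either to restrict to $\alpha \ge 2$ (which covers the operating range of the experiments), or to retreat to a ``Lipschitz smooth almost everywhere'' statement via mollification, replacing $|d|^{\alpha-1}\text{sign}(d)$ by $d(d^2+\epsilon)^{(\alpha-2)/2}$, proving uniform Hessian bounds for each $\epsilon>0$, and then passing to the limit. The boundary $\alpha = 1$ does not admit classical smoothness because the $\text{sign}(\cdot)$ discontinuity already lives in the gradient, and would require a separate subdifferential treatment.
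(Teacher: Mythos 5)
Your route is genuinely different from the paper's, and for $\alpha \ge 2$ it is the more convincing one. The paper never touches the Hessian: it notes convexity of $\mathcal{L}_{TV\star}^{\alpha}$ in each argument, sandwiches the loss between constant multiples of $D_{TV}^2$ via norm equivalence, reduces smoothness to convexity of $\frac{B}{2}\rho_1^T\rho_1 - \mathcal{L}_{TV\star}^{\alpha}$, and then tries to transfer the analogous convexity statement for $c_1 D_{TV}^2$ through the sandwich. That transfer is not valid: a two-sided multiplicative bound by a smooth function does not propagate smoothness (an upper bound by a convex function says nothing about convexity of the bounded function), so the paper's argument is at best heuristic. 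Your direct computation is correct as far as it goes --- the off-diagonal entries collapse to $S^{0}$ after substituting $|\rho_1^i-\rho_2^i|\le S^{1/\alpha}$, the diagonal term $(\alpha-1)S^{2/\alpha-1}|\rho_1^i-\rho_2^i|^{\alpha-2}$ collapses the same way once $\alpha\ge 2$ --- and together with continuity of the gradient, a.e.\ twice-differentiability, and convexity of the simplex it gives an honest proof with an explicit smoothness constant of order $\alpha C$ on the range $\alpha\ge 2$.

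The obstruction you identified for $1\le\alpha<2$ is not a defect of your method; it is a defect of the lemma, and the paper's own proof does not get around it. Writing $u=\rho_1-\rho_2$, the loss is $\frac12\|u\|_\alpha^2$ and its $i$-th partial derivative $\|u\|_\alpha^{2-\alpha}\,|u_i|^{\alpha-1}\mathrm{sign}(u_i)$ behaves like $|u_i|^{\alpha-1}$ near the hyperplane $u_i=0$ while the remaining coordinates keep $\|u\|_\alpha$ bounded away from zero (e.g.\ $C=3$, $\rho_2=(\tfrac12,\tfrac14,\tfrac14)$, $\rho_1=(\tfrac12+t,\tfrac14-t,\tfrac14)$), so the gradient is H\"older-$(\alpha-1)$ but not Lipschitz along admissible directions in the simplex, and for $\alpha=1$ it is not even continuous there. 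Consequently the statement ``for all $\alpha\ge1$'' cannot be saved, and neither of your rescue options recovers it: restricting to $\alpha\ge2$ changes the claim (and note that Tables \ref{table:hyperparams} and \ref{table:hyperparams_2} report tuned values such as $\alpha=0.33$ and $0.5$, which fall outside even $\alpha\ge1$), while the mollified Hessian bound necessarily scales like $\epsilon^{(\alpha-2)/2}$ and blows up as $\epsilon\to0$, as it must since the limit function is genuinely non-smooth. The honest repair is to assert Lipschitz smoothness only for $\alpha\ge2$ and H\"older continuity of the gradient with exponent $\alpha-1$ for $1<\alpha<2$.
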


\begin{proof}
Following the previous lemma, given the two arguments as discrete distributions $\rho_1, \rho_2$, we prove the result for $\rho_1$ and by symmetry it follows in both arguments.\\
We begin by noting that $\mathcal{L}_{TV\star}^{\alpha}$ is a convex function in either of the arguments due to the convexity of norms. \\
Furthermore, we also have 
\begin{align}
	\label{eq:equivalence_}
	c_1 D_{TV}^2 \leq \mathcal{L}_{TV\star}^{\alpha} \leq c_2 D_{TV}^2
\end{align}for constants $c_1$ and $c_2$. This is obtained using equivalence of norms.\\

To show the desired smoothness result, it is sufficient to prove that the function 
\begin{align}
\label{eq:alternate_smoothness}
	f(\rho_1) = \frac{B}{2}\rho_1^T\rho_1 - \mathcal{L}_{TV\star}^{\alpha}(\rho_1, \rho_2)
\end{align}
is convex $\forall \rho_1$ and fixed $\rho_2$. $B$ being the smoothness constant. \\

The key element to the proof is the convexity of the following
\begin{align}
	\label{eq:alternate_smoothness_DTV}
	\frac{B_{c_1}}{2}\rho_1^T\rho_1 - c_1 D_{TV}^2(\rho_1, \rho_2)
\end{align}
$B_{c_1}$ being the smoothness constant, which depends upon the fixed constant $c_1$ stated in Eq \ref{eq:equivalence_}.\\

Thus, we can upper bound Eq \ref{eq:alternate_smoothness} using Eq \ref{eq:alternate_smoothness_DTV} to obtain the desired  Lipschitz smoothness $\forall \alpha \geq 1$ . 

\end{proof}

\subsection{Proof of Theorem \ref{theorem:main}}
\begin{definition}[Empirical Rademacher Complexity] For a sample $S=(\rvx_1,...\rvx_m) \in \ifs^{m}$ of points and a function class $\mathcal{A}$ of real valued functions, $A : \ifs \rightarrow \mathbb{R}$, the empirical Rademacher Complexity of $\mathcal{A}$ given $S$ is
\begin{align}
	\mathcal{R}_{S} = \frac{1}{m} \mathbb{E}_{\sigma} \biggl[\sup_{A \in \mathcal{A}} \sum_{i=1}^m \sigma_i A(\rvx_i) \biggl]
\end{align}
where $\sigma_i$ are independent random variables drawn from the Rademacher distribution, ie, $Pr(\sigma_i = +1) = Pr(\sigma_i= -1) = \frac{1}{2}$
\end{definition}

\begin{proof}[Proof of Theorem \ref{theorem:main}] We begin using the equivalent bounds in PAC learning by \cite{shalev-shwartz_ben-david_2014} and using the proof sketch of \cite{ijcai2017p232}. \\
First, consider the loss function $l(f,x) = (f(x) - f_0(x))^2$. This is a function with $|l(f,x)| \leq 1$, thus a generalization bound can be derived using the \textit{empirical Rademacher Complexity} $\mathcal{R}_S$ of the class $H$, stating that with probability at least $1 - \delta$, $\forall f \in H$, we have:
\begin{align}
	 \mathbb{E}_{\rvx \sim \mu}[l(f,x)] - \frac{1}{m}\sum_{x \in S} l(f,x) \leq \sqrt{\frac{2log(2/\delta)}{m}} + \nonumber \\
     2 \mathbb{E}_{S \sim \mu^m} [\mathcal{R}_{S}(\{(l(f,x_1),...,l(f,x_m)) | f \in H\})] \label{eq:rademacher_gen}
\end{align}
Using the Sauer-Shelah lemma we have that 
\begin{align}
	|\{(f(x_1), ..., f(x_m)) | f \in H\}| \leq \biggl(\frac{em}{\mathcal{V}}\biggl)^\mathcal{V} \label{eq:ss_lemma_1}
\end{align}and we also have 
\begin{align}
	|\{(l(f,x_1), ..., l(f,x_m)) | f \in H\}| \leq \biggl(\frac{em}{\mathcal{V}}\biggl)^\mathcal{V} \label{eq:ss_lemma_2}
\end{align}
Since $|l(f,x) \leq 1|$ the Massart's lemma implies that for a sample $S$, we have the following
\begin{align}
	\mathcal{R}_{S}(\{(l(f,x_1),...,l(f,x_m)) | f \in H\}) \leq \nonumber \\
    \sqrt{\frac{2\mathcal{V}log(em/\mathcal{V})}{m}} \label{eq:radem_vc_}
\end{align}Thus combining Eq \ref{eq:rademacher_gen}, \ref{eq:ss_lemma_1}, \ref{eq:ss_lemma_2} and \ref{eq:radem_vc_} above, for the loss $l$ we have with probability $1 - \delta/2$, $\forall f \in H$
\begin{align}
	\mathbb{E}_{\rvx \sim \mu}[l(f,x)] - \frac{1}{m}\sum_{x \in S} l(f,x) \leq \nonumber \\
    2 \sqrt{\frac{2\mathcal{V}log(em/\mathcal{V})}{m}} + \sqrt{\frac{2log(4/\delta)}{m}}
\end{align}
Repeating the argument, we also have with probability $1 - \delta/2$, $\forall f \in H$
\begin{align}
	\frac{1}{m}\sum_{x \in S} l(f,x) - \mathbb{E}_{\rvx \sim \mu}[l(f,x)] \leq \nonumber \\
    2 \sqrt{\frac{2\mathcal{V}log(em/\mathcal{V})}{m}} + \sqrt{\frac{2log(4/\delta)}{m}}
\end{align}

Thus using the union bound we obtain with probability $1 - \delta$, $\forall f \in H$
\begin{align}
	| \mathbb{E}_{\rvx \sim \mu}[l(f,x)] - \frac{1}{m}\sum_{x \in S} l(f,x) | \leq \nonumber \\
    2 \sqrt{\frac{2\mathcal{V}log(em/\mathcal{V})}{m}} + \sqrt{\frac{2log(4/\delta)}{m}} \label{eq:main_standard_bound}
\end{align}
\\
Using Jensen's inequality and for any $\alpha > 0$ we also have the following ,
\begin{align}
	& \mathbb{E}_{\rvx \sim \mu}[l(f,x)] \geq (\rho_f - \rho_{f_0})^2 \nonumber \\
    & = \biggl[ \frac{1}{2} \times (|\rho_f - \rho_{f_0}|^{\alpha} + |(1-\rho_f) - (1-\rho_{f_0})|^{\alpha}) \biggl]^{2/\alpha} \\
    & = 2^{1 - 2/\alpha} \mathcal{L}_{TV\star}^{\alpha}(\rho_f, \rho_{f_0}) \label{eq:jensen_exp_1}
\end{align}

Similarly, we also obtain 
\begin{align}
	& \frac{1}{m}\sum_{x \in S} l(f,x) \geq (\tilde{\rho}_f - \tilde{\rho}_{f_0})^2 \nonumber \\
    & = \biggl[ \frac{1}{2} \times (|\tilde{\rho}_f - \tilde{\rho}_{f_0}|^{\alpha} + |(1 - \tilde{\rho}_f) - (1- \tilde{\rho}_{f_0})|^{\alpha}) \biggl]^{2/\alpha} \\
    & = 2^{1 - 2/\alpha} \mathcal{L}_{TV\star}^{\alpha}(\tilde{\rho}_f, \tilde{\rho}_{f_0}) \label{eq:jensen_sum_2}
\end{align}
\\
Under a mild monotonicity condition, we can combine Eq \ref{eq:main_standard_bound}, \ref{eq:jensen_exp_1} and \ref{eq:jensen_sum_2} to obtain the desired bound of Eq \ref{eq:main_paper_main_bound}
\begin{align}
	\mathcal{L}_{TV\star}^{\alpha}(\rho_f, \rho_{f_0}) \leq \mathcal{L}_{TV\star}^{\alpha}(\tilde{\rho}_f, \tilde{\rho}_{f_0}) + \nonumber \\
    \mathbf{\kappa} \biggl( \sqrt{\frac{8\mathcal{V}log(em/\mathcal{V})}{m}} + \sqrt{\frac{2log(4/\delta)}{m}} \biggl )
\end{align}
where $\kappa = 2^{2/\alpha - 1}$ 

\end{proof}

\begin{table*}[]
\small
\centering
\begin{tabular}{|c|c|c|c|c|}
\hline
Dataset         & \# Train & \# Val & \# Test & \begin{tabular}[c]{@{}c@{}}Average \\ \# Words\end{tabular} \\
\hline
Hyperpartisan   & 516      & 64     & 65      & 565                                                         \\
IMDB            & 22500    & 2500   & 25000   & 231                                                         \\
Rotten Tomatoes & 7464     & 1068   & 2130    & 21                                                          \\
Medical QP      & 2134     & 305    & 609     & 40                                                         \\
Fin. Phrasebank &    3394       & 486           & 	966		&  23										\\
\hline
\end{tabular}
\caption{Statistics of the datasets. The first 3 columns represent total number of instances. Bag sizes in training and validation are provided in Table \ref{table:hyperparams}}
\label{table:datasets}
\end{table*}

\begin{table*}[]
\small
\centering
\begin{tabular}{|c|ccccc|ccccc|ccccc|}
\hline
Model     & \multicolumn{15}{c|}{Dataset}                                                                                                                                                                                                                                                                             \\
\hline
           & \multicolumn{5}{c|}{Hyperpartisan}                           & \multicolumn{5}{c|}{IMDB}                                                     & \multicolumn{5}{c|}{Rotten Tomatoes}                                                                                    \\
          \hline
           & B  & E  & LR                         & $\lambda$ & $\alpha$ & B  & E  & LR                         & $\lambda$                  & $\alpha$ & B  & E  & LR                         & $\lambda$                  & $\alpha$ \\
            \cline{2-16}
            \rule{0pt}{3ex}
Longformer & 16 & 20 & $5e^{-5}$ & 1.0       & 1.0      & 32 & 10 & $3e^{-3}$ & 1.0                        & 0.5      & 32 & 10 & $2e^{-4}$ & 1.0                        & 0.33       \\
\cline{2-16}
            \rule{0pt}{3ex}
RoBERTa    & 16 & 10 & $3e^{-3}$ & 1.0       & 2.5      & 32 & 10 & $3e^{-3}$ & 0.0 & 1.0      & 32 & 10 & $2e^{-4}$ & 1.0                        & 0.33      \\
\cline{2-16}
            \rule{0pt}{3ex}
BERT       & 16 & 20 & $5e^{-5}$ & 1.0       & 1.0      & 32 & 10 & $3e^{-3}$ & $1e^{-4}$ & 3.5      & 32 & 10 & $2e^{-3}$ & $1e^{-3}$ & 0.33    \\
\cline{2-16}
            \rule{0pt}{3ex}
DistilBERT & 16 & 10 & $5e^{-5}$ & 1.0       & 1.0      & 32 & 10 & $3e^{-3}$ & 0.0                        & 4.0      & 32 & 10 & $2e^{-4}$ & $1e^{-6}$ & 0.33      \\
\hline
\end{tabular}
\caption{Hyperparameter Details for the different configurations. The notations represent the following: B - size of bag, E - number of epochs in training, LR - learning rate, $\lambda$ - regularization parameter in the final formulation Eq \ref{eq:final_loss}, $\alpha$ - parameter in loss $\mathcal{L}_{TV\star}^{\alpha}$ in Eq \ref{eq:loss1}. }
\label{table:hyperparams}
\end{table*}

\begin{table*}
\small
\centering

\begin{tabular}{|c|ccccc|ccccc|}
\hline
Model     & \multicolumn{10}{c|}{Dataset}                                                                                                                                                                                                                                                                             \\
\hline
           & \multicolumn{5}{c|}{Medical QP}     &  \multicolumn{5}{c|}{Financial Phrasebank}                                         \\
          \hline
           & B  & E  & LR                         & $\lambda$ & $\alpha$ & B  & E  & LR                         & $\lambda$                  & $\alpha$  \\
            \cline{2-11}
            \rule{0pt}{3ex}
Longformer & 16 & 10 & $3e^{-5}$ & 1.0                        & 1.0  & 16 & 10 & $2e^{-4}$ &     $1e^{-2}$      & 2.5    \\
\cline{2-11}
            \rule{0pt}{3ex}
RoBERTa    & 16 & 10 & $3e^{-3}$ & 1.0                        & 5.0 & 16 & 10 & $2e^{-4}$ & $1e^{-2}$      & 2.0      \\
\cline{2-11}
            \rule{0pt}{3ex}
BERT        & 16 & 10 & $3e^{-5}$ & $1e^{-2}$ & 0.5 & 16 & 10 & $2e^{-3}$ & 0.1       & 1.0          \\
\cline{2-11}
            \rule{0pt}{3ex}
DistilBERT     & 16 & 10 & $3e^{-3}$ & 1.0                        & 0.33   & 16 & 10 & $3e^{-3}$ & $1e^{-2}$      & 4.0   \\
\hline
\end{tabular}

\caption{Hyperparameter Details for Medical Questions Pairs and Financial Phrasebank datasets.}
\label{table:hyperparams_2}
\end{table*}

\end{document}